\let\@internalcite\cite
\def\cite{\def\citeauthoryear##1##2{##1, ##2}\@internalcite}
\def\shortcite{\def\citeauthoryear##1##2{##2}\@internalcite}
\def\@biblabel#1{\def\citeauthoryear##1##2{##1, ##2}[#1]\hfill}
\newcommand{\spectralfpl}{{\tt SpectralLeader}}
\newcommand{\nosemic}{\renewcommand{\@endalgocfline}{\relax}}
\newcommand{\dosemic}{\renewcommand{\@endalgocfline}{\algocf@endline}}
\let\oldnl\nl
\newcommand{\nonl}{\renewcommand{\nl}{\let\nl\oldnl}}
\DeclareMathOperator*{\argmin}{argmin}
\newcommand{\commentout}[1]{}
\newcommand{\junk}[1]{}
\newcommand{\etal}{\emph{et al.}}
\begin{document}
\title{$\spectralfpl$: Online Spectral Learning for Single Topic Models}
%
%
\author{Tong Yu\inst{1} \and
Branislav Kveton \inst{2}\and
Zheng Wen\inst{2} \and
Hung Bui\inst{3} \and
Ole J. Mengshoel\inst{1}}
%
%
\institute{Carnegie Mellon University \and
Adobe Research \and Google DeepMind\\
\email{tongy1@andrew.cmu.edu},
\email{\{kveton,zwen\}@adobe.com},
\email{bui.h.hung@gmail.com},
\email{ole.mengshoel@sv.cmu.edu}}
\maketitle   

\begin{abstract}
We study the problem of learning a latent variable model from a stream of data. Latent variable models are popular in practice because they can explain observed data in terms of unobserved concepts. These models have been traditionally studied in the offline setting. In the online setting, on the other hand, the online EM is arguably the most popular algorithm for learning latent variable models. Although the online EM is computationally efficient, it typically converges to a local optimum. In this work, we develop a new online learning algorithm for latent variable models, which we call $\spectralfpl$. $\spectralfpl$ always converges to the global optimum, and we derive a sublinear upper bound on its $n$-step regret in the bag-of-words model. In both synthetic and real-world experiments, we show that $\spectralfpl$ performs similarly to or better than the online EM with tuned hyper-parameters.
  \end{abstract}


\section{Introduction}
\label{sec:intro}

Latent variable models are classical approaches to explain observed data through unobserved concepts. They have been successfully applied in a wide variety of fields, such as speech recognition, natural language processing, and computer vision \cite{rabiner1989tutorial,wallach2006topic,nowozin2011structured,bishop2006pattern}. Despite their successes, latent variable models are typically studied in the offline setting. However, in many practical problems, a learning agent needs to learn a latent variable model online while interacting with real-time data with unobserved concepts. For instance, a recommender system may want to learn to cluster its users online based on their real-time behavior. This paper aims to develop algorithms for such online learning problems.

Previous works proposed several algorithms to learn latent variable models online by extending the expectation maximization (EM) algorithm. Those algorithms are known as online EM algorithms, and include the stepwise EM \cite{cappe2009line,liang2009online} and the incremental EM \cite{neal1998view}. Similar to the offline EM, each iteration of an online EM algorithm includes an E-step to fill in the values of latent variables based on their estimated distribution, and an M-step to update the model parameters. The main difference is that each step of online EM algorithms only uses data received in the current iteration, rather than the whole dataset. This ensures that online EM algorithms are computationally efficient and can be used to learn latent variable models online. However, similarly to the offline EM, online EM algorithms have one major drawback: they may converge to a local optimum and hence suffer from a non-diminishing performance loss.

To overcome these limitations, we develop an online learning algorithm that performs almost as well as the globally optimal latent variable model, which we call $\spectralfpl$. Specifically, we propose an online learning variant of the spectral method \cite{anandkumar2014tensor}, which can learn the parameters of latent variable models offline with guarantees of convergence to a global optimum. Our online learning setting is defined as follows. We have a sequence of $n$ topic models, one at each time $t \in [n]$. The prior distribution of topics can change arbitrarily over time, while the conditional distribution of words is stationary. At time $t$, the learning agent observes a document of words, which is sampled i.i.d. from the model at time $t$. The goal of the agent is to predict a sequence of model parameters with low cumulative regret with respect to the best solution in hindsight, which is constructed based on the sampling distribution of the words over $n$ steps.

This paper makes several contributions. First, it is the first paper to formulate online learning with the spectral method as a regret minimization problem. Second, we propose $\spectralfpl$, an online learning variant of the spectral method for our problem. To reduce computational and space complexities of $\spectralfpl$, we introduce reservoir sampling. Third, we prove a sublinear upper bound on the $n$-step regret of $\spectralfpl$. Finally, we compare $\spectralfpl$ to the stepwise EM in extensive synthetic and real-world experiments. We observe that the stepwise EM is sensitive to the setting of its hyper-parameters. In all experiments, $\spectralfpl$ performs similarly to or better than the stepwise EM with tuned hyper-parameters.


\section{Related Work}
The spectral method by tensor decomposition has been widely applied in different latent variable models, such as mixtures of tree graphical models \cite{anandkumar2014tensor}, mixtures of linear regressions \cite{chaganty2013spectral}, hidden Markov models (HMM) \cite{anandkumar2012method}, latent Dirichlet allocation (LDA) \cite{anandkumar2012spectral}, Indian buffet process \cite{tung2014spectral}, and hierarchical Dirichlet process \cite{tung2017spectral}.  One major advantage of the spectral method is that it learns globally optimal solutions \cite{anandkumar2014tensor}. The spectral method first empirically estimates low-order moments of observations and then applies decomposition methods with a unique solution to recover the model parameters. 

Traditional online learning methods for latent variable models usually extend traditional iterative methods for learning latent variable models in the offline setting. Offline EM calculates the sufficient statistics based on all the data, while in online EM the sufficient statistics are updated with the recent data in each iteration \cite{cappe2009line,neal1998view,liang2009online}. Online variational inference is used to learn LDA efficiently \cite{hoffman2010online}. These online algorithms converge to local minima, while we aim to develop an algorithm with a theoretical guarantee of convergence to a global optimum. 

An online spectral learning method has also been developed \cite{huang2015online}, with a focus on improving computational efficiency, by conducting optimization of multilinear operations in SGD and avoiding directly forming the tensors. Online stochastic gradient for tensor decomposition has been analyzed \cite{ge2015escaping} with a different online setting: they do not look at the online problem as regret minimization and the analysis focuses on convergence to a local minimum. In contrast, we develop an online spectral method with a theoretical guarantee of convergence to a global optimum. Besides, our method is robust in the non-stochastic setting where the topics of documents are correlated over time. This non-stochastic setting has not been previously studied in the context of online spectral learning \cite{huang2015online}.


\section{Spectral Method for Topic Model}
\label{sec:spectraltopic}

This section introduces the spectral method in latent variable models. Specifically, we describe how the method works in the simple bag-of-words model \cite{anandkumar2014tensor}. 

In the bag-of-words model, the goal is to understand the latent topic of documents based on the observed words in each document. 
Without loss of generality, we describe the spectral method and $\spectralfpl$ (\cref{sec:spectralleader}) in the setting where the number of words in each document is $3$. The extension to more than $3$ words is straightforward \cite{anandkumar2014tensor}. Let the number of distinct topics be $K$ and the size of the vocabulary be $d$. Then our model can be viewed as a mixture model, where the observed words $\mathbf{x}^{(1)}$, $\mathbf{x}^{(2)}$, and $\mathbf{x}^{(3)}$ are conditionally i.i.d. given topic $C$, which is also i.i.d.. Later in \cref{sec:setting}, we study a more general setting where the topics are non-stationary, in the sense that the distributions of topics can change arbitrarily over time. Each word is one-hot encoded, $\mathbf{x}^{(l)} = e_i$ if and only if $\mathbf{x}^{(l)}$ represents word $i$, where $e_1, \ldots, e_d$ is the standard coordinate basis in $\mathbb{R}^d$. The model is parameterized by the probability of each topic $j$, $\omega_j = P(C =j)$ for $j \in [K]$, and the conditional probability of all words $u_j \in [0, 1]^d$ given topic $j$. The $i$th entry of $u_j$ is $u_{j}(i) = P(\mathbf{x}^{(l)} = e_i|C=j)$ for $i \in [d]$. With $3$ observed words, it suffices to construct a third order tensor $\bar{M}_3$ as
\begin{align*}
  \mathbb{E}[\mathbf{x}^{(1)} \otimes \mathbf{x}^{(2)} \otimes \mathbf{x}^{(3)}] =
  \sum_{1 \leq i, j, k \leq d} P(\mathbf{x}^{(1)} = e_i, \mathbf{x}^{(2)} = e_j, \mathbf{x}^{(3)} = e_k) \ e_i \otimes e_j \otimes e_k.
\end{align*}
To recover the parameters of the topic model, we want to decompose $\bar{M}_3$ as 
\begin{align}
\label{eq:decomposeM}
\bar{M}_3 = \sum_{i=1}^K \omega_i u_i \otimes u_i \otimes u_i.
\end{align}
Unfortunately, such a decomposition is generally NP-hard \cite{anandkumar2014tensor}. Instead, we can decompose an orthogonal decomposable tensor. One way to make $\bar{M}_3$ orthogonal decomposable is by whitening. We can define a whitening matrix as $\bar{W} = U A^{-1/2}$, where $A \in \mathbb{R}^{K\times K}$ is the diagonal matrix of positive eigenvalues of $\bar{M}_2 = \mathbb{E}[\mathbf{x}^{(1)} \otimes \mathbf{x}^{(2)}] = \sum_{i=1}^K \omega_i u_i \otimes u_i$, and $U \in \mathbb{R}^{d\times K}$ is the matrix of $K$ eigenvectors associated with those eigenvalues. After whitening, instead of decomposing $\bar{M}_3$, we can decompose $\bar{T} = \mathbb{E}[\bar{W}^\top\mathbf{x}^{(1)} \otimes \bar{W}^\top\mathbf{x}^{(2)} \otimes \bar{W}^\top\mathbf{x}^{(3)}]$ as $\bar{T} = \sum_{i=1}^K \lambda_i v_i \otimes v_i \otimes v_i$ by the \emph{power iteration method} \cite{anandkumar2014tensor}. Finally, the model parameters are recovered by $\omega_i = \frac{1}{\lambda_i^2}$ and $u_i = \lambda_i  (\bar{W}^\top)^+ v_i$, where $(\bar{W}^\top)^+$ is the pseudoinverse of $\bar{W}^\top$. In practice, only a noisy realization of $\bar{T}$, $T$, is typically available, and it is constructed from empirical counts. Such tensors can be decomposed approximately, and the error of such a decomposition is analyzed in Theorem 5.1 of Anandkumar \etal~\shortcite{anandkumar2014tensor}.


\section{Online Learning for Topic Models}
\label{sec:setting}

We study the following online learning problem in a single topic model. We have a sequence of $n$ topic models, one at each time $t \in [n]$. The prior distribution of topics can change arbitrarily over time, while the conditional distribution of words is stationary. We denote by $\mathbf{x}_t = (\mathbf{x}_t^{(l)})_{l = 1}^3$ a tuple of one-hot encoded words in the document at time $t$, which is sampled i.i.d. from the model at time $t$. Non-stationary distributions of topics are common in practice. For instance, in the recommender system example in \cref{sec:intro}, user clusters tend to be correlated over time. The clusters can be viewed as topics.

We represent the distribution of words at time $t$ by a cube $P_t = \mathbb{E}[\mathbf{x}_t^{(1)}\otimes\mathbf{x}_t^{(2)}\otimes\mathbf{x}_t^{(3)}]  \in [0, 1]^{d \times d \times d}$. In particular, the probability of observing the triplet of words $(i, j, k)$ at time $t$ is
\begin{align}
  P_t(i,j,k) = \sum_{c=1}^{K}P_t(c)P(\mathbf{x}_t^{(1)} = e_i|c)P(\mathbf{x}_t^{(2)}= e_j|c)P(\mathbf{x}_t^{(3)}= e_k|c)\,,
\end{align}
where $P_t(c)$ is the prior distribution of topics at time $t$. This prior distribution can change arbitrarily with $t$.

The learning agent predicts the distribution of words $\hat{M}_{3, t - 1} \in [0, 1]^{d \times d \times d}$ at time $t$ and is evaluated by its per-step loss $\ell_t(\hat{M}_{3, t - 1})$. The agent aims to minimize its cumulative loss, which measures the difference between the predicted distribution $\hat{M}_{3, t - 1}$ and the observations $\mathbf{x}_t^{(1)}\otimes\mathbf{x}_t^{(2)}\otimes\mathbf{x}_t^{(3)}$ over time.

But what should the loss be? In this work, we define the \emph{loss} at time $t$ as
\begin{align}
\label{eq:ourloss}
  \ell_t(M) = \| \mathbf{x}_t^{(1)}\otimes\mathbf{x}_t^{(2)}\otimes\mathbf{x}_t^{(3)} -M \|_F^{2}, 
\end{align}
where $\|.\|_F$ is the \emph{Frobenius norm}. For any tensor $M \in \mathbb{R}^{d \times d \times d}$, we define its Frobenius norm as $\|M\|_F = \sqrt{\sum_{i,j,k=1}^d M(i, j, k)^2}$. This choice can be justified as follows. Let
\begin{align}
\bar{M}_{3, n} = \frac{1}{n}\sum_{t=1}^{n}P_t = \frac{1}{n}\sum_{t=1}^{n}\mathbb{E}[\mathbf{x}_t^{(1)}\otimes\mathbf{x}_t^{(2)}\otimes\mathbf{x}_t^{(3)}]
\label{eq:barmn}
\end{align}
be the average of distributions from which $\mathbf{x}_t^{(1)}\otimes\mathbf{x}_t^{(2)}\otimes\mathbf{x}_t^{(3)}$ are generated in $n$ steps. Then
\begin{align}
\label{eq:motivation_loss}
\bar{M}_{3, n} = \argmin_{M \in [0, 1]^{d \times d \times d}} \sum_{t = 1}^n \mathbb{E}[\ell_t(M)]\,,
\end{align}
as shown in \cref{lem:argmin} in \cref{sec:technical lemmas}. In other words, the loss function is chosen such that a natural \emph{best solution in hindsight}, $\bar{M}_{3, n}$ in \eqref{eq:motivation_loss}, is the minimizer of the cumulative loss.

With the definition of the loss function and the best solution in hindsight, the goal of the learning agent is to minimize the regret
\begin{align}
  R(n) =
  \sum_{t = 1}^n \mathbb{E}[\ell_t(\hat{M}_{3, t - 1}) - \ell_t(\bar{M}_{3, n})]\,,
  \label{eq:regret}
\end{align}
where $\ell_t(\hat{M}_{3, t - 1})$ is the loss of our estimated model at time $t$ and $\ell_t(\bar{M}_{3, n})$ is the loss of the best solution in hindsight, respectively.

Unlike traditional online algorithms that minimize the negative log-likelihood \cite{liang2009online}, we minimize the parameter recovery loss. In the offline setting, the spectral method minimizes the recovery loss in a wide range of models \cite{anandkumar2014tensor,chaganty2013spectral,shaban2015learning}.


\section{Algorithm $\spectralfpl$}
\label{sec:spectralleader}

We propose $\spectralfpl$, an online learning algorithm for minimizing the regret in \eqref{eq:regret}. Its pseudocode is in \cref{alg:spectralmethodnoisy}. At each time $t$, the input is the observation $(\mathbf{x}_t^{(l)})_{l = 1}^3$. We also maintain a set of reservoir samples $((\mathbf{x}_z^{(l)})_{l = 1}^3)_{z \in \mathcal{S}_{t-1}}$, where $\mathcal{S}_{t-1}$ is the set of the time indices of these reservoir samples from the previous $t-1$ time steps.

{\Huge
\begin{algorithm}[t]
\normalsize

\caption{$\spectralfpl$ at time $t$}
 \label{alg:spectralmethodnoisy}
 \KwData{$(\mathbf{x}_t^{(l)})_{l = 1}^3$} 

$M_{2,t-1} \leftarrow \frac{1}{|\mathcal{S}_{t-1}| |\Pi_2(3)|} \sum_{z \in \mathcal{S}_{t-1}}\sum_{\pi \in \Pi_2(3)} \mathbf{x}_z^{(\pi(1))} \otimes \mathbf{x}_z^{(\pi(2))}$

$W_{t-1} \leftarrow U_{t-1}A_{t-1}^{-1/2}$ 

\dosemic\nonl//~$A_{t-1} \in \mathbb{R}^{K\times K}$  is the diagonal matrix of $K$ positive eigenvalues of $M_{2,t-1}$, and $U_{t-1} \in \mathbb{R}^{d\times K}$  is the matrix of eigenvectors associated with these positive eigenvalues.


$T_{t-1} \leftarrow \frac{1}{|\mathcal{S}_{t-1}| |\Pi_3(3)|} \sum_{z \in \mathcal{S}_{t-1}} \sum_{\pi \in \Pi_3(3)} W_{t-1}^\top \mathbf{x}_z^{(\pi(1))} \otimes W_{t-1}^\top \mathbf{x}_z^{(\pi(2))} \otimes W_{t-1}^\top \mathbf{x}_z^{(\pi(3))}$

Obtain $(\lambda_{t - 1,i})_{i = 1}^K$ and $(v_{t - 1,i})_{i = 1}^K$ from $T_{t-1}$ by power iteration method

$\omega_{t-1,i} \leftarrow \frac{1}{\lambda_{t-1,i}^2}, \quad u_{t-1,i} \leftarrow \lambda_{t-1,i}  (W_{t-1}^\top)^+ v_{t-1,i}$ for all $i \in [K]$


Generate a random number $a \in [0, 1]$

\If {$t \leq m$}{$\mathcal{S}_{t} \leftarrow \mathcal{S}_{t-1}\cup \{t\}$}
\ElseIf {$a \leq m / (t-1)$}{Remove a random sample from $\mathcal{S}_{t-1}$ \\ $\mathcal{S}_{t} \leftarrow \mathcal{S}_{t-1} \cup \{t\}$} 
\Else{$\mathcal{S}_{t} \leftarrow \mathcal{S}_{t-1}$}
	
\KwResult{Model parameters $\omega_{t-1,i}$ and $u_{t-1,i}$.}
\end{algorithm}
}

The algorithm operates as follows. First, in line $1$ we construct the second-order moment from the reservoir samples, where $\Pi_2(3)$ is the set of all $2$-permutations of $[3]$. Then we estimate $A_{t - 1}$ and $U_{t - 1}$ by eigendecomposition, and construct the whitening matrix $W_{t - 1}$ in line $2$. After whitening, in line $3$ we build the third-order tensor $T_{t - 1}$ from whitened words $((W_{t-1}^\top \mathbf{x}_z^{(l)})_{l = 1}^3)_{z \in \mathcal{S}_{t-1}}$, where $\Pi_3(3)$ is the set of all $3$-permutations of $[3]$. Then in line $4$ with the power iteration method \cite{anandkumar2014tensor}, we decompose $T_{t - 1}$ and get its eigenvalues $(\lambda_{t - 1, i})_{i = 1}^K$ and eigenvectors $(v_{t - 1, i})_{i = 1}^K$. Finally, in line $5$ we recover the parameters of the model, the probability of topics $(\omega_{t - 1, i})_{i = 1}^K$ and the conditional probability of words $(u_{t - 1, i})_{i = 1}^K$. After recovering the parameters, we update the set of reservoir samples from line $6$ to line $13$. We keep $m$ reservoir samples $\mathbf{x}_z$, $z \in [t-1]$. When $t \leq m$, the new observation $(\mathbf{x}_t^{(l)})_{l = 1}^3$ is added to the pool. When $t > m$, the new observation $(\mathbf{x}_t^{(l)})_{l = 1}^3$ replaces a random observation in the pool with probability $m / (t-1)$.

In $\spectralfpl$, we introduce reservoir sampling for computational efficiency reasons. Without reservoir sampling, the operations in lines $1$ and $3$ of \cref{alg:spectralmethodnoisy} would depend on $t$ because all past observations are used to construct $M_{2,t-1}$ and $T_{t-1}$. Besides, the whitening operation in line $3$ would depend on $t$ because all past observations are whitened by a matrix $W_{t - 1}$ that changes with $t$. With reservoir sampling, we approximate $M_{2,t-1}$, $T_{t-1}$, and $W_{t - 1}$ with $m$ reservoir samples. We discuss how to set $m$ in detail in Section \ref{sec:analysis_sub2}.

\newcommand{\condE}[2]{\mathbb{E} \left[#1 \,\middle|\, #2\right]}
\newcommand{\condEsub}[3]{\mathbb{E}_{#3} \! \left[#1 \,\middle|\, #2\right]}
\newcommand{\E}[1]{\mathbb{E} \left[#1\right]}
\newcommand{\Esub}[2]{\mathbb{E}_{#2} \! \left[#1\right]}

\section{Analysis}
\label{sec:analysis}

In this section, we bound the regret of $\spectralfpl$. In \cref{sec:analysis_sub1}, we analyze the regret of $\spectralfpl$ without reservoir sampling in the noise-free setting. In this setting, the agent knows $(P_z)_{z=1}^{t-1}$ at time $t$. The regret is due to not knowing $P_t$ at time $t$. In \cref{sec:analysis_sub2}, we analyze the regret of $\spectralfpl$ with reservoir sampling in the noise-free setting. In this setting, the agent knows $(P_z)_{z \in \mathcal{S}_{t-1}}$ at time $t$, which is a subset of $(P_z)_{z=1}^{t-1}$. In comparison to \cref{sec:analysis_sub1}, the additional regret is due to reservoir sampling. In \cref{sec:analysis_sub3}, we discuss the regret of $\spectralfpl$ with reservoir sampling in the noisy setting. In this setting, the agent approximates each distribution $P_z$ with its single empirical observation $(\mathbf{x}_z^{(l)})_{l = 1}^3$, for any $z \in \mathcal{S}_{t-1}$. In comparison to \cref{sec:analysis_sub2}, the additional regret is due to noisy observations.


\subsection{$\spectralfpl$ without Reservoir Sampling in Noise-Free Setting}
\label{sec:analysis_sub1}

We first analyze an idealized variant of $\spectralfpl$, where the agent knows $(P_z)_{z=1}^{t-1}$ at time $t$. In this setting, the algorithm is similar to \cref{alg:spectralmethodnoisy}, except that lines $1$ and $3$ are replaced by 
\begin{align*}
\bar{M}_{2,t-1} &= \frac{1}{t-1} \sum_{z =1}^{t-1} \mathbb{E}[\mathbf{x}_z^{(1)} \otimes \mathbf{x}_z^{(2)}]\,, \\
\bar{T}_{t-1} &= \frac{1}{t-1} \sum_{z =1}^{t-1} \mathbb{E}[\bar{W}_{t-1}^\top \mathbf{x}_z^{(1)} \otimes \bar{W}_{t-1}^\top \mathbf{x}_z^{(2)} \otimes \bar{W}_{t-1}^\top \mathbf{x}_z^{(3)}]\,.
\end{align*}
We denote by $\bar{W}_{t-1}$ the corresponding whitening matrix in line $2$, and by $\bar{\omega}_{t-1,i}$ and $\bar{u}_{t-1,i}$ the estimated model parameters. Note that in the noise free setting, the power iteration method in line $4$ is exact, and therefore
\begin{align*}
  \hat{M}_{3, t-1} = \sum_{i=1}^K \bar{\omega}_{t-1,i} \, \bar{u}_{t-1,i} \otimes \bar{u}_{t-1,i} \otimes \bar{u}_{t-1,i} = \bar{M}_{3, t-1}
\end{align*}
at any time $t$, according to \eqref{eq:decomposeM}. The main result of this section is stated below.

\begin{theorem}
\label{thm:noise-free regret} Let $\hat{M}_{3, t-1} = \bar{M}_{3, t - 1}$ at all times $t \in [n]$. Then
\begin{align*}
  R(n) \leq 4 \sqrt{d^3} \log n\,.
\end{align*}
\end{theorem}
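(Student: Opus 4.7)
The plan is to recognize that in the noise-free setting the algorithm is exactly Follow-the-Leader with a squared Frobenius loss, and then to invoke the Be-the-Leader bound. Concretely, define $L_z(M):=\mathbb{E}[\ell_z(M)]=\|M\|_F^2-2\langle P_z,M\rangle+\mathbb{E}\|X_z\|_F^2$ with $X_z=\mathbf{x}_z^{(1)}\otimes\mathbf{x}_z^{(2)}\otimes\mathbf{x}_z^{(3)}$. The prescribed $\hat M_{3,t-1}=\bar M_{3,t-1}$ is deterministic and satisfies $\hat M_{3,t-1}=\tfrac{1}{t-1}\sum_{z<t}P_z=\argmin_M\sum_{z<t}L_z(M)$, while $\bar M_{3,n}=\argmin_M\sum_{t\le n}L_t(M)$ by \cref{lem:argmin}. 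Because $\hat M_{3,t-1}$ is deterministic, the regret collapses to $R(n)=\sum_{t=1}^n\bigl(L_t(\hat M_{3,t-1})-L_t(\bar M_{3,n})\bigr)$, so we are in the classical FTL setting over a sequence of convex quadratics.

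Next I would apply the Be-the-Leader lemma to the $L_t$ to obtain
\[
R(n)\le\sum_{t=1}^n\bigl(L_t(\hat M_{3,t-1})-L_t(\hat M_{3,t})\bigr),
\]
and then compute each summand in closed form using the one-step update $\hat M_{3,t}-\hat M_{3,t-1}=\tfrac{1}{t}(P_t-\hat M_{3,t-1})$. A short expansion of $L_t$ (noting $\nabla L_t(M)=2(M-P_t)$) gives
\[
L_t(\hat M_{3,t-1})-L_t(\hat M_{3,t})=\Bigl(\tfrac{2}{t}-\tfrac{1}{t^2}\Bigr)\|\hat M_{3,t-1}-P_t\|_F^2.
\]
It then remains to bound $\|\hat M_{3,t-1}-P_t\|_F^2$ uniformly in $t$. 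Because $\hat M_{3,t-1}$ and $P_t$ are probability tensors on $[d]^3$, each entry of their difference lies in $[-1,1]$, so $\|\hat M_{3,t-1}-P_t\|_F\le\sqrt{d^3}$; meanwhile the probability-simplex structure forces $\|\hat M_{3,t-1}-P_t\|_F\le 2$ by the triangle inequality and $\ell_2\le\ell_1$ for sub-probability vectors. Writing $\|\Delta\|_F^2=\|\Delta\|_F\cdot\|\Delta\|_F$ and using one copy of each bound yields $\|\hat M_{3,t-1}-P_t\|_F^2\le 2\sqrt{d^3}$. Substituting this and $\sum_{t=1}^n(2/t-1/t^2)\le 2\log n$ finishes the proof.

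I expect the main obstacle to be the bookkeeping around Step B: the naive entrywise bound $\|\cdot\|_F^2\le d^3$ is too weak and immediately loses a $\sqrt{d^3}$ factor, while a purely probability-simplex bound would give a $d$-independent constant that does not match the statement. Producing the exact factor $4\sqrt{d^3}\log n$ requires combining the two bounds multiplicatively as above. A secondary point of care is that the Be-the-Leader inequality in Step A must be applied to the \emph{expected} losses $L_t$ rather than the realized losses $\ell_t$; this is legitimate in this idealized noise-free section because $\hat M_{3,t-1}$ is defined as the minimizer of $\sum_{z<t}L_z$, and the outer expectation in the definition of $R(n)$ then passes through the linearity of the quadratic expansion without further work.
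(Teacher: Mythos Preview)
Your proof is correct and follows the same Follow-the-Leader/Be-the-Leader skeleton as the paper: reduce to $R(n)\le\sum_t\bigl(L_t(\bar M_{3,t-1})-L_t(\bar M_{3,t})\bigr)$ (this is exactly the paper's \cref{lem:c0}), bound each summand, and sum harmonically. The only execution difference is in the per-step bound. The paper invokes \cref{lem:tool}, a Lipschitz-type inequality $\ell_t(M)-\ell_t(M')\le 4\|M-M'\|_F$, and then bounds $\|\bar M_{3,t-1}-\bar M_{3,t}\|_F=\tfrac{1}{t}\|\hat M_{3,t-1}-P_t\|_F\le \sqrt{d^3}/t$ entrywise; you instead expand the quadratic exactly to $(\tfrac{2}{t}-\tfrac{1}{t^2})\|\hat M_{3,t-1}-P_t\|_F^2$ and then split $\|\Delta\|_F^2=\|\Delta\|_F\cdot\|\Delta\|_F\le 2\sqrt{d^3}$ using one copy each of the entrywise and simplex bounds. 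Both routes land on the same $4\sqrt{d^3}/t$. Your calculation is slightly sharper and more self-contained for this theorem, while the paper's \cref{lem:tool} is packaged as a reusable lemma that it applies again in \cref{thm:noise-free-sample regret}. Both proofs share the same harmless slack in the last step (the harmonic sum $\sum_{t=1}^n 1/t$ is not literally $\le\log n$, only $\le 1+\log n$).
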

\begin{proof}
From \cref{lem:c0} in \cref{sec:technical lemmas}, $\sum_{t=1}^n \mathbb{E}[\ell_t(\bar{M}_{3, n})]  \geq \sum_{t=1}^n \mathbb{E}[\ell_t(\bar{M}_{3, t})]$. Now note that $\hat{M}_{3, t-1} = \bar{M}_{3, t-1}$ at any time $t$, and therefore
\begin{align*}
  R(n) =
  \sum_{t=1}^n \mathbb{E}[\ell_t(\bar{M}_{3, t-1}) - \ell_t(\bar{M}_{3, n})] \leq
  \sum_{t=1}^n \mathbb{E}[\ell_t(\bar{M}_{3, t-1}) - \ell_t(\bar{M}_{3, t})]\,.
\end{align*}
At any time $t$ and for any $\mathbf{x}_t$,
\begin{align*}
  \ell_t(\bar{M}_{3, t-1}) - \ell_t(\bar{M}_{3, t})
  & \leq 4 \| \bar{M}_{3, t-1} - \bar{M}_{3, t}\|_F \\
  & = 4 \left\| \frac{1}{t-1} \sum_{t' = 1}^{t-1} P_{t'}- \frac{1}{t} \sum_{t' = 1}^{t} P_{t'}\right\|_F \\
  & = \frac{4}{t} \left\| \frac{1}{t - 1} \sum_{t' = 1}^{t-1} P_{t'} - P_t \right\|_F \\
  & \leq \frac{4 \sqrt{d^3}}{t}\,,
\end{align*}
where the first inequality is by \cref{lem:tool} and the second inequality is from the fact that all entries of $P_t$ are in $[0, 1]$ at any time $t \in [n]$.
\begin{align*}
  R(n) \leq
  \sum_{t = 1}^n \frac{4 \sqrt{d^3}}{t} \leq
  4 \sqrt{d^3} \log n\,.
\end{align*}
This concludes our proof.
\end{proof}


\subsection{$\spectralfpl$ with Reservoir Sampling in Noise-Free Setting}
\label{sec:analysis_sub2}

We further analyze $\spectralfpl$ with reservoir sampling in the noise-free setting. As discussed in \cref{sec:spectralleader}, without reservoir sampling, the construction time of the decomposed tensor at time $t$ would grow linearly with $t$, which is undesirable. In this setting, the algorithm is similar to \cref{alg:spectralmethodnoisy}, except that lines $1$ and $3$ are replaced by
\begin{align*}
  \tilde{M}_{2,t-1} & =
  \frac{1}{|\mathcal{S}_{t-1}| } \mathbb{E}[\sum_{z \in \mathcal{S}_{t-1}} \mathbf{x}_z^{(1)} \otimes \mathbf{x}_z^{(2)}]\,, \\
  \tilde{T}_{t-1} & =
  \frac{1}{|\mathcal{S}_{t-1}|} \mathbb{E}[\sum_{z \in \mathcal{S}_{t-1}} \tilde{W}_{t-1}^\top \mathbf{x}_z^{(1)} \otimes \tilde{W}_{t-1}^\top \mathbf{x}_z^{(2)} \otimes \tilde{W}_{t-1}^\top \mathbf{x}_z^{(3)}]\,,
\end{align*}
where $\mathcal{S}_{t-1}$ are indices of the reservoir samples at time $t$. We denote by $\tilde{W}_{t-1}$ the corresponding whitening matrix in line $2$, and by $\tilde{\omega}_{t-1,i}$ and $\tilde{u}_{t-1,i}$ the estimated model parameters. As in \cref{sec:analysis_sub1}, the power iteration method in line $4$ is exact, and therefore
\begin{align*}
  \hat{M}_{3, t-1} =
  \sum_{i=1}^K \tilde{\omega}_{t-1,i} \, \bar{u}_{t-1,i} \otimes \tilde{u}_{t-1,i} \otimes \tilde{u}_{t-1,i} =
  \tilde{M}_{3, t-1}
\end{align*}
at any time $t$. The main result of this section is stated below.

\begin{theorem}
\label{thm:noise-free-sample regret} Let all corresponding entries of $\tilde{M}_{3, t-1}$ and $\bar{M}_{3, t-1}$ be close with a high probability,
\begin{align}
  P(\exists t, i, j, k: |\tilde{M}_{3, t-1}(i, j, k) - \bar{M}_{3, t-1}(i, j, k)| \geq \epsilon) = \delta
  \label{eq:assumption}
\end{align}
for some small $\epsilon \in [0, 1]$ and $\delta \in [0, 1]$. Let $\hat{M}_{3, t - 1} = \tilde{M}_{3, t - 1}$ at all times $t \in [n]$. Then
\begin{align*}
  R(n) \leq 4\sqrt{d^3} \epsilon n +4\sqrt{d^3} \delta  n + 4\sqrt{d^3} \log n\,.
\end{align*}
\end{theorem}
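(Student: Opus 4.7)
The plan is to decompose the regret into two pieces by inserting $\bar{M}_{3,t-1}$ between $\tilde{M}_{3,t-1}$ and $\bar{M}_{3,n}$, so that one piece is handled by \cref{thm:noise-free regret} and the other picks up exactly the reservoir-sampling error bounded by the assumption in \eqref{eq:assumption}. Concretely, I would write
\begin{align*}
R(n)
  &= \sum_{t=1}^n \mathbb{E}[\ell_t(\tilde{M}_{3,t-1}) - \ell_t(\bar{M}_{3,t-1})]
   + \sum_{t=1}^n \mathbb{E}[\ell_t(\bar{M}_{3,t-1}) - \ell_t(\bar{M}_{3,n})].
\end{align*}
The second sum is the regret of the idealized algorithm of \cref{sec:analysis_sub1}, so by the argument inside the proof of \cref{thm:noise-free regret} it is already bounded by $4\sqrt{d^3}\log n$.

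For the first sum, I would apply \cref{lem:tool} pointwise to obtain
\[
\ell_t(\tilde{M}_{3,t-1}) - \ell_t(\bar{M}_{3,t-1}) \leq 4\,\|\tilde{M}_{3,t-1} - \bar{M}_{3,t-1}\|_F,
\]
and then bound $\|\tilde{M}_{3,t-1} - \bar{M}_{3,t-1}\|_F$ by splitting on the event $E$ that for all $t,i,j,k$ the entry-wise gap is at most $\epsilon$. On $E$, the Frobenius norm is at most $\sqrt{d^3}\,\epsilon$; on the complement (which has probability at most $\delta$ by hypothesis), both tensors have entries in $[0,1]$, so the Frobenius norm is at most $\sqrt{d^3}$. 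Taking expectations gives
\[
\mathbb{E}\bigl[\|\tilde{M}_{3,t-1} - \bar{M}_{3,t-1}\|_F\bigr] \;\leq\; \sqrt{d^3}\,\epsilon + \sqrt{d^3}\,\delta,
\]
uniformly in $t$, so the first sum is at most $4\sqrt{d^3}\epsilon\, n + 4\sqrt{d^3}\delta\, n$. Adding the two bounds yields the claim.

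The only nontrivial points are (i) that the assumption is a \emph{uniform} high-probability bound over all $t$, which is exactly what allows a single $\delta$-penalty rather than a union bound that would blow up with $n$, and (ii) the crude worst-case Frobenius bound $\sqrt{d^3}$ on the bad event, which relies on the entries of both $\tilde{M}_{3,t-1}$ and $\bar{M}_{3,t-1}$ lying in $[0,1]$ (true since each is an average of expectations of tensor products of one-hot vectors). I expect the main obstacle, if any, to be cleanly invoking \cref{lem:tool} under the expectation while handling the reservoir-sampling randomness; once the event-splitting above is in place, the rest is arithmetic.
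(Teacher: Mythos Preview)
Your proposal is correct and essentially identical to the paper's proof: the same decomposition by inserting $\bar{M}_{3,t-1}$, the same appeal to \cref{thm:noise-free regret} for the second sum, and the same event-splitting via \cref{lem:tool} (with the $\sqrt{d^3}\epsilon$ bound on the good event and the crude $\sqrt{d^3}$ bound on the bad event) for the first sum. The only cosmetic difference is that you bound the Frobenius norm first and then multiply by $4$, whereas the paper bounds the loss difference directly; the arithmetic is the same.
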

\begin{proof}
From the definition of $R(n)$ and the bound in \cref{thm:noise-free regret},
\begin{align*}
  R(n)
  & = \sum_{t=1}^n \mathbb{E}[\ell_t(\tilde{M}_{3, t-1}) - \ell_t(\bar{M}_{3, t-1})] + \sum_{t=1}^n \mathbb{E}[\ell_t(\bar{M}_{3, t-1}) - \ell_t(\bar{M}_{3, n})]\\
  & \leq \sum_{t=1}^n \mathbb{E}[\ell_t(\tilde{M}_{3, t-1}) - \ell_t(\bar{M}_{3, t-1})] +4\sqrt{d^3} \log n\,.
\end{align*}
We bound the first term above as follows. Suppose that the event in \eqref{eq:assumption} does not happen. Then $\ell_t(\tilde{M}_{3, t-1}) - \ell_t(\bar{M}_{3, t-1}) \leq 4\sqrt{d^3} \epsilon$, from \cref{lem:tool} and the fact that all corresponding entries of $\tilde{M}_{3, t-1}$ and $\bar{M}_{3, t-1}$ are $\epsilon$ close. Now suppose that the event in \eqref{eq:assumption} happens. Then $\ell_t(\tilde{M}_{3, t-1}) - \ell_t(\bar{M}_{3, t-1}) \leq 4\sqrt{d^3}$, from \cref{lem:tool} and the fact all entries of $\tilde{M}_{3, t-1}$ and $\bar{M}_{3, t-1}$ are in $[0, 1]$. Finally, note that the event in \eqref{eq:assumption} happens with probability $\delta$. Now we chain all inequalities and get that $R(n) \leq 4\sqrt{d^3} \epsilon n +4\sqrt{d^3} \delta  n + 4\sqrt{d^3} \log n$.
\end{proof}

Note that the reservoir at time $t$, $\mathcal{S}_{t-1} \in [t - 1]$, is a random sample of size $m$ for any $t > m + 1$. Therefore, from Hoeffding's inequality \cite{hoeffding1963probability} and the union bound, we get that
\begin{align*}
  \delta
  & = P(\exists t, i, j, k: |\tilde{M}_{3, t-1}(i, j, k) - \bar{M}_{3, t-1}(i, j, k)| \geq \epsilon) \\
  & \leq 2 \sum_{t = m + 2}^n d^3 \exp[-2 \epsilon^2 m] \\
  & \leq 2 d^3 n \exp[-2 \epsilon^2 m]\,.
\end{align*}
In addition, let the size of the reservoir be $m = \epsilon^{-2} \log(d^3 n)$. Then the regret bound in \cref{thm:noise-free-sample regret} simplifies to
\begin{align*}
  R(n) < 4\sqrt{d^3} \epsilon n + 4\sqrt{d^3} \log n + 8\,.
\end{align*}
The above bound can be sublinear in $n$ only if $\epsilon = o(1)$. Moreover, the definition of $m$ and $m \leq n$ imply that $\epsilon \geq \sqrt{\log(d^3 n) / n}$. As a result of these constraints, the range of reasonable values for $\epsilon$ is $[\sqrt{\log(d^3 n) / n}, o(1))$.

For any $\epsilon \in [\sqrt{\log(d^3 n) / n}, o(1))$, the regret $R(n)$ is sublinear in $n$, where $\epsilon$ is a tunable parameter. At lower values of $\epsilon$, $R(n) = O(\sqrt{n})$ but the reservoir size approaches $n$. At higher values of $\epsilon$, the reservoir size is $O(\log{n})$ but $R(n)$ approaches $n$.


\subsection{$\spectralfpl$ with Reservoir Sampling in Noisy Setting}
\label{sec:analysis_sub3}

Finally, we discuss the regret of $\spectralfpl$ with reservoir sampling in the noisy setting. In this setting, the analyzed algorithm is \cref{alg:spectralmethodnoisy}. The predicted distribution at time $t$ is $\hat{M}_{3, t - 1} = \sum_{i = 1}^K \omega_{t - 1, i} \, u_{t - 1, i} \otimes u_{t - 1, i} \otimes u_{t - 1, i}$.

From the definition of $R(n)$ and the discussion in \cref{sec:analysis_sub3}, $R(n)$ can be decomposed and bounded from above as
\begin{align}
  R(n) \leq
  \sum_{t = 1}^n \mathbb{E}[\ell_t(\hat{M}_{3, t-1}) - \ell_t(\tilde{M}_{3, t-1})] + 4 \sqrt{d^3} \epsilon n + 4 \sqrt{d^3} \log n + 8
  \label{eq:final bound}
\end{align}
when the size of the reservoir is $m = \epsilon^{-2} \log(d^3 n)$.

Suppose that $m \to \infty$ as $n \to \infty$, for instance by setting $\epsilon = n^{- \frac{1}{4}}$. Under this assumption, $M_{2, t - 1}$ in $\spectralfpl$ approaches $\tilde{M}_{2, t - 1}$ (\cref{sec:analysis_sub2}) because $M_{2, t - 1}$ is an empirical estimator of $\tilde{M}_{2, t - 1}$ on $m$ observations. By Weyl's and Davis-Kahan theorems \cite{weyl1912asymptotische,davis1970rotation}, the eigenvalues and eigenvectors of $M_{2, t - 1}$ approach those of $\tilde{M}_{2, t - 1}$ as $m \to \infty$, and thus the whitening matrix $W_{t - 1}$ in $\spectralfpl$ approaches $\tilde{W}_{t - 1}$ (\cref{sec:analysis_sub2}). Since $T_{t - 1}$ in $\spectralfpl$ is an empirical estimator of $\tilde{T}_{t - 1}$ (\cref{sec:analysis_sub2}) on $m$ whitened observations, where $W_{t - 1} \to \tilde{W}_{t - 1}$, $T_{t - 1} \to \tilde{T}_{t - 1}$ as $m \to \infty$. By Theorem 5.1 of Anandkumar \etal~\shortcite{anandkumar2014tensor}, the eigenvalues and eigenvectors of $T_{t - 1}$ approach those of $\tilde{T}_{t - 1}$ as $T_{t - 1} \to \tilde{T}_{t - 1}$. This implies that $\hat{M}_{3, t - 1} \to\tilde{M}_{3, t - 1}$, as all quantities in the definitions of $\hat{M}_{3, t - 1}$ and $\tilde{M}_{3, t - 1}$ approach each other as $m \to \infty$. Therefore, $\lim_{n \to \infty} \lim_{t \to n} (\ell_t(\hat{M}_{3, t - 1}) - \ell_t(\tilde{M}_{3, t - 1})) = 0$ and the regret bound in \eqref{eq:final bound} is $o(n)$, sublinear in $n$, as $n \to \infty$.

\subsection{Technical Lemmas}
\label{sec:technical lemmas}

\begin{lemma}
\label{lem:argmin}
Let $\ell_t(M) = \|\mathbf{x}_t^{(1)}\otimes\mathbf{x}_t^{(2)}\otimes\mathbf{x}_t^{(3)} -M\|_F^2$. Then
\begin{align}
  \bar{M}_{3, n} = \argmin_{M \in [0, 1]^{d \times d \times d}} \sum_{t=1}^{n}\mathbb{E}[\ell_t(M)]\,,
  \label{eq:argmin}
\end{align}
where $\bar{M}_{3, n}$ is defined in \eqref{eq:barmn}.
\end{lemma}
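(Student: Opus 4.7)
The plan is to reduce the tensor-valued minimization to a collection of scalar minimizations, one per coordinate, and then use the fact that the minimizer of a squared loss is the mean. Let $Y_t = \mathbf{x}_t^{(1)}\otimes\mathbf{x}_t^{(2)}\otimes\mathbf{x}_t^{(3)}$, so that
\begin{align*}
\ell_t(M) = \|Y_t - M\|_F^2 = \sum_{i,j,k=1}^{d}\bigl(Y_t(i,j,k) - M(i,j,k)\bigr)^2.
\end{align*}
By linearity of expectation, $F(M) := \sum_{t=1}^{n}\mathbb{E}[\ell_t(M)]$ then decomposes as a sum of independent one-dimensional objectives, one for each coordinate $(i,j,k)$. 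Because the feasible set $[0,1]^{d\times d \times d}$ is itself a product of intervals, minimizing $F$ over it is equivalent to minimizing each scalar objective over $[0,1]$ separately.

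Next I would fix $(i,j,k)$ and minimize $\phi(m) := \sum_{t=1}^{n}\mathbb{E}[(Y_t(i,j,k)-m)^2]$ over $m \in [0,1]$. Expanding the square and differentiating in $m$ (or simply using the standard fact that the mean minimizes mean squared deviation) yields the unconstrained minimizer
\begin{align*}
m^\star = \frac{1}{n}\sum_{t=1}^{n}\mathbb{E}[Y_t(i,j,k)],
\end{align*}
which is exactly $\bar{M}_{3,n}(i,j,k)$ by the definition in \eqref{eq:barmn}. The function $\phi$ is a strictly convex quadratic in $m$, so the unconstrained minimizer is also the unique constrained minimizer whenever it lies in $[0,1]$.

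The final step is to verify feasibility: since each $\mathbf{x}_t^{(l)}$ is one-hot encoded, every entry $Y_t(i,j,k) \in \{0,1\}$ almost surely, hence $\mathbb{E}[Y_t(i,j,k)] \in [0,1]$ and thus $m^\star \in [0,1]$. Assembling the per-coordinate minimizers gives $\bar{M}_{3,n}$ as claimed. There is no real obstacle here; the only thing to be careful about is confirming that $\bar{M}_{3,n}$ actually lies in the feasible cube $[0,1]^{d\times d\times d}$, which is immediate from one-hot encoding, so the constrained and unconstrained minimizers coincide.
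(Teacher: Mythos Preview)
Your proof is correct and follows essentially the same coordinate-wise reduction and first-order argument as the paper's own proof. If anything, you are slightly more careful: you explicitly justify the separability via the product structure of $[0,1]^{d\times d\times d}$ and verify that the unconstrained minimizer lies in $[0,1]$ (so the constraint is inactive), whereas the paper simply sets the derivative to zero without commenting on feasibility or convexity.
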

\begin{proof}
It is sufficient to show that
\begin{align}
  \label{eq:argminentry}
  \bar{M}_{3, n}(i,j,k) = \argmin_{y \in [0, 1]}\frac{1}{n}\sum_{t=1}^{n}\mathbb{E}[(\mathbf{x}_t^{(1)}\otimes\mathbf{x}_t^{(2)}\otimes\mathbf{x}_t^{(3)}(i,j,k) - y)^2]
\end{align}
for any $(i, j, k)$, where $\bar{M}_{3, n}(i,j,k)$ and $\mathbf{x}_t^{(1)}\otimes\mathbf{x}_t^{(2)}\otimes\mathbf{x}_t^{(3)}(i,j,k)$ are the $(i,j,k)$-th entries of tensors $\bar{M}_{3, n}$ and $\mathbf{x}_t^{(1)}\otimes\mathbf{x}_t^{(2)}\otimes\mathbf{x}_t^{(3)}$, respectively. To prove the claim, let $f(y) = \frac{1}{n}\sum_{t=1}^{n}\mathbb{E}[(\mathbf{x}_t^{(1)}\otimes\mathbf{x}_t^{(2)}\otimes\mathbf{x}_t^{(3)}(i,j,k) - y)^2]$. Then
\begin{align*}
  \frac{\partial}{\partial y} f(y) =
  2 y -
  \frac{2}{n}\sum_{t=1}^{n}\mathbb{E}[\mathbf{x}_t^{(1)}\otimes\mathbf{x}_t^{(2)}\otimes\mathbf{x}_t^{(3)}(i,j,k)]\,.
\end{align*}
Now we put the derivative equal to zero and get $y = \bar{M}_{3, n}(i,j,k)$. This concludes our proof.
\end{proof}

\begin{lemma}
\label{lem:c0} For any $n$, $\sum_{t=1}^n \mathbb{E}[\ell_t(\bar{M}_{3, n})]  \geq \sum_{t=1}^n \mathbb{E}[\ell_t(\bar{M}_{3, t})]$.
\end{lemma}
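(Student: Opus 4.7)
The statement is an instance of the classical \emph{Be-the-Leader} inequality from online learning. By \cref{lem:argmin} applied to any prefix $[\tau]$, the tensor $\bar{M}_{3,\tau}$ is the minimizer of the cumulative expected loss $\sum_{t=1}^{\tau} \mathbb{E}[\ell_t(\cdot)]$ over $[0,1]^{d\times d\times d}$; that is, $\bar{M}_{3,\tau}$ is the (offline) leader at time $\tau$. The claim then says that using, at each $t$, the leader that already sees round $t$ (i.e.\ the in-hindsight minimizer through time $t$) is at most as costly as committing in advance to the leader through time $n$. So the natural plan is to adapt the textbook Be-the-Leader argument to our averaged-tensor setting.

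The plan is to prove, by induction on $n$, the stronger claim
\begin{align*}
  \sum_{t=1}^{n} \mathbb{E}[\ell_t(\bar{M}_{3,t})] \;\leq\; \sum_{t=1}^{n} \mathbb{E}[\ell_t(M)]
  \qquad \text{for every } M \in [0,1]^{d \times d \times d},
\end{align*}
and then specialize to $M = \bar{M}_{3,n}$ to obtain the lemma. Strengthening to a universally quantified $M$ is the standard trick that makes the induction go through, since the induction step needs to plug in an $M$ that is itself defined in terms of the current round.

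For the base case $n=1$, \cref{lem:argmin} gives $\bar{M}_{3,1} \in \argmin_{M} \mathbb{E}[\ell_1(M)]$, so the inequality is immediate. For the inductive step, I would apply the inductive hypothesis with the specific choice $M = \bar{M}_{3,n}$, getting
\begin{align*}
  \sum_{t=1}^{n-1} \mathbb{E}[\ell_t(\bar{M}_{3,t})]
  \;\leq\; \sum_{t=1}^{n-1} \mathbb{E}[\ell_t(\bar{M}_{3,n})],
\end{align*}
and then add $\mathbb{E}[\ell_n(\bar{M}_{3,n})]$ to both sides. The left-hand side becomes $\sum_{t=1}^{n} \mathbb{E}[\ell_t(\bar{M}_{3,t})]$, and the right-hand side is $\sum_{t=1}^{n} \mathbb{E}[\ell_t(\bar{M}_{3,n})]$, which by \cref{lem:argmin} is $\leq \sum_{t=1}^{n} \mathbb{E}[\ell_t(M)]$ for any $M$. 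Chaining these two inequalities closes the induction.

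There is no real obstacle here; the only subtlety is noticing that the lemma as stated (only the case $M = \bar{M}_{3,n}$) is too weak to serve as its own induction hypothesis, because the inductive step must substitute a round-$n$-dependent argument into a bound on sums over $t \leq n-1$. Strengthening to \textquotedblleft for all $M$\textquotedblright\ removes this circularity, and the appeal to \cref{lem:argmin} at time $n$ is what converts this stronger statement back to the form stated in the lemma.
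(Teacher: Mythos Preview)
Your proof is correct and follows essentially the same Be-the-Leader induction as the paper: both argue by induction on $n$ and use \cref{lem:argmin} to pass from one prefix minimizer to the next. The one point worth flagging is your remark that the unstrengthened statement is ``too weak to serve as its own induction hypothesis''; this is not actually the case. The paper inducts directly on the lemma as stated: from
\[
\sum_{t=1}^{n}\mathbb{E}[\ell_t(\bar{M}_{3,n})]=\sum_{t=1}^{n-1}\mathbb{E}[\ell_t(\bar{M}_{3,n})]+\mathbb{E}[\ell_n(\bar{M}_{3,n})],
\]
it first applies \cref{lem:argmin} at horizon $n-1$ to get $\sum_{t=1}^{n-1}\mathbb{E}[\ell_t(\bar{M}_{3,n})]\ge\sum_{t=1}^{n-1}\mathbb{E}[\ell_t(\bar{M}_{3,n-1})]$, and \emph{then} invokes the induction hypothesis for $n-1$. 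So the circularity you anticipated is avoided by using \cref{lem:argmin} before, rather than after, the inductive appeal; your universal-$M$ strengthening achieves the same end and is the standard textbook phrasing, but it is not required here.
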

\begin{proof}
We prove this claim by induction. First, suppose that $n = 0$. Then trivially $\mathbb{E}[\ell_t(\bar{M}_{3,0})]  \geq \mathbb{E}[\ell_t(\bar{M}_{3,0})]$. Second, by induction hypothesis, we have that
\begin{align}
\label{eq:induction_assume}
\sum_{t=1}^{n-1} \mathbb{E}[\ell_t(\bar{M}_{3,n-1})]  \geq \sum_{t=1}^{n-1} \mathbb{E}[\ell_t(\bar{M}_{3, t})]\,.
\end{align} 
Then
\begin{align*}
  \sum_{t=1}^n \mathbb{E}[\ell_t(\bar{M}_{3, n})]
  & = \sum_{t=1}^{n-1} \mathbb{E}[\ell_t(\bar{M}_{3, n})] +  \mathbb{E}[\ell_n(\bar{M}_{3, n})] \\
  & \geq \sum_{t=1}^{n-1} \mathbb{E}[\ell_t(\bar{M}_{3,n-1})]  +  \mathbb{E}[\ell_n(\bar{M}_{3, n})]\\
  & \geq \sum_{t=1}^{n} \mathbb{E}[\ell_t(\bar{M}_{3, t})]\,,
\end{align*}
where the first inequality is from (\ref{eq:argmin}) and the second inequality is from (\ref{eq:induction_assume}). This concludes our proof.
\end{proof}

\begin{lemma}
\label{lem:tool}
For any tensors $M \in [0, 1]^{d\times d\times d}$ satisfying $\sum_{i,j,k=1}^d M(i, j, k) =  1$, and $M' \in [0, 1]^{d\times d\times d}$ satisfying $\sum_{i,j,k=1}^d M'(i, j, k) =  1$, we have
\begin{align*}
  \ell_t(M) - \ell_t(M' ) \leq 4 \| M - M'\|_F.
\end{align*}
\end{lemma}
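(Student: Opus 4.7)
The plan is to reduce the difference of squared Frobenius distances to an inner product, then apply Cauchy--Schwarz and bound the remaining factor by $4$. Write $X = \mathbf{x}_t^{(1)} \otimes \mathbf{x}_t^{(2)} \otimes \mathbf{x}_t^{(3)}$ for brevity. The Frobenius norm is induced by the natural inner product $\langle A, B \rangle = \sum_{i,j,k} A(i,j,k) B(i,j,k)$, so I would start from the identity
\begin{align*}
\ell_t(M) - \ell_t(M') = \|X - M\|_F^2 - \|X - M'\|_F^2 = \langle M' - M,\ 2X - M - M'\rangle,
\end{align*}
which is just the usual ``difference of squares'' expansion for inner product spaces.

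Applying Cauchy--Schwarz gives $\ell_t(M) - \ell_t(M') \leq \|M - M'\|_F \cdot \|2X - M - M'\|_F$, so it remains to show $\|2X - M - M'\|_F \leq 4$. By the triangle inequality, this follows from $\|X\|_F \leq 1$ and $\|M\|_F,\|M'\|_F \leq 1$. For $X$, note that each $\mathbf{x}_t^{(l)}$ is a coordinate basis vector, so $X$ has exactly one entry equal to $1$ and all others equal to $0$, giving $\|X\|_F = 1$. For $M$ (and identically for $M'$), since every entry lies in $[0,1]$ we have $M(i,j,k)^2 \leq M(i,j,k)$, hence
\begin{align*}
\|M\|_F^2 = \sum_{i,j,k} M(i,j,k)^2 \leq \sum_{i,j,k} M(i,j,k) = 1,
\end{align*}
by the assumed normalization. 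Combining these bounds yields $\|2X - M - M'\|_F \leq 2 + 1 + 1 = 4$, which finishes the proof.

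There is really no hard step here; the only subtlety worth flagging is that the constant $4$ critically relies on the normalization hypothesis $\sum_{i,j,k} M(i,j,k) = 1$, which upgrades the entry-wise bound $M(i,j,k) \in [0,1]$ into the global bound $\|M\|_F \leq 1$ (rather than the much weaker $\|M\|_F \leq \sqrt{d^3}$ one would get from $[0,1]$ entries alone). This normalization is satisfied in all invocations of the lemma inside Theorems \ref{thm:noise-free regret} and \ref{thm:noise-free-sample regret}, since $\bar{M}_{3,t}$ and $\tilde{M}_{3,t}$ are averages of probability tensors and hence themselves have entries summing to $1$.
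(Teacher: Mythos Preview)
Your proof is correct. The route differs slightly from the paper's: the paper factors at the scalar level, writing $\ell_t(M)-\ell_t(M') = (\ell_t^{1/2}(M)+\ell_t^{1/2}(M'))(\ell_t^{1/2}(M)-\ell_t^{1/2}(M'))$, then bounds the second factor by $\|M-M'\|_F$ via the reverse triangle inequality and the first factor by $2\|X\|_F+\|M\|_F+\|M'\|_F$ via the ordinary triangle inequality. You instead stay in the inner-product space, using the polarization identity $\|X-M\|_F^2-\|X-M'\|_F^2=\langle M'-M,\,2X-M-M'\rangle$ and Cauchy--Schwarz. Both arguments land on the same product $\|M-M'\|_F\cdot(2\|X\|_F+\|M\|_F+\|M'\|_F)$ and finish with the identical norm bounds $\|X\|_F=1$, $\|M\|_F,\|M'\|_F\le 1$. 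Your version is marginally more direct (one inequality instead of two to extract the $\|M-M'\|_F$ factor), and your closing remark about why the normalization hypothesis is essential is a nice addition that the paper leaves implicit.
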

\begin{proof}
The proof follows from elementary algebra
\begin{align*}
  & \ell_t(M) - \ell_t(M' )\\
  & \quad = (\ell_t^{\frac{1}{2}}(M) + \ell_t^{\frac{1}{2}}(M' )) (\ell_t^{\frac{1}{2}}(M) - \ell_t^{\frac{1}{2}}(M' ))\\
  & \quad \leq (\ell_t^{\frac{1}{2}}(M) + \ell_t^{\frac{1}{2}}(M'))  \| M - M' \|_F \\
  & \quad = \left(\| \mathbf{x}_t^{(1)}\otimes\mathbf{x}_t^{(2)}\otimes\mathbf{x}_t^{(3)} -M \|_F + \| \mathbf{x}_t^{(1)}\otimes\mathbf{x}_t^{(2)}\otimes\mathbf{x}_t^{(3)} -M' \|_F\right) \| M - M' \|_F \\
  & \quad \leq \left(2 \| \mathbf{x}_t^{(1)}\otimes\mathbf{x}_t^{(2)}\otimes\mathbf{x}_t^{(3)}\|_F + \|M \|_F +\| M' \|_F\right) \| M - M' \|_F \\
  & \quad = \left(2 + \|M \|_F +\| M' \|_F\right) \| M - M' \|_F \\
  & \quad \leq 4 \|M - M' \|_F\,.
\end{align*}
The first equality is from $\alpha^2 -\beta^2 = (\alpha+\beta)(\alpha-\beta)$. The first inequality is from the reverse triangle inequality.  The second inequality is from the triangle inequality. The third equality is from the fact that only one entry of $\mathbf{x}_t^{(1)}\otimes\mathbf{x}_t^{(2)}\otimes\mathbf{x}_t^{(3)}$ is $1$ and all the rest are $0$, by the definition of $(\mathbf{x}_t^{(l)})_{l = 1}^3$ in Section \ref{sec:setting}. The third inequality is from
$\|M \|_F = \sqrt{\sum_{i,j,k=1}^d M(i, j, k)^2} \leq \sqrt{\sum_{i,j,k=1}^d |M(i, j, k)|} =  1$, and $\|M' \|_F = \sqrt{\sum_{i,j,k=1}^d M'(i, j, k)^2} \leq \sqrt{\sum_{i,j,k=1}^d |M'(i, j, k)|} =  1$, which follow the fact that tensors $M$ and $M'$ represent distributions with all entries in $[0, 1]$ and summing up to $1$. 
\end{proof}


\begin{figure}[hp]
  \centering
  \begin{subfigure}[b]{0.40\textwidth}
  \includegraphics[width=1\textwidth]{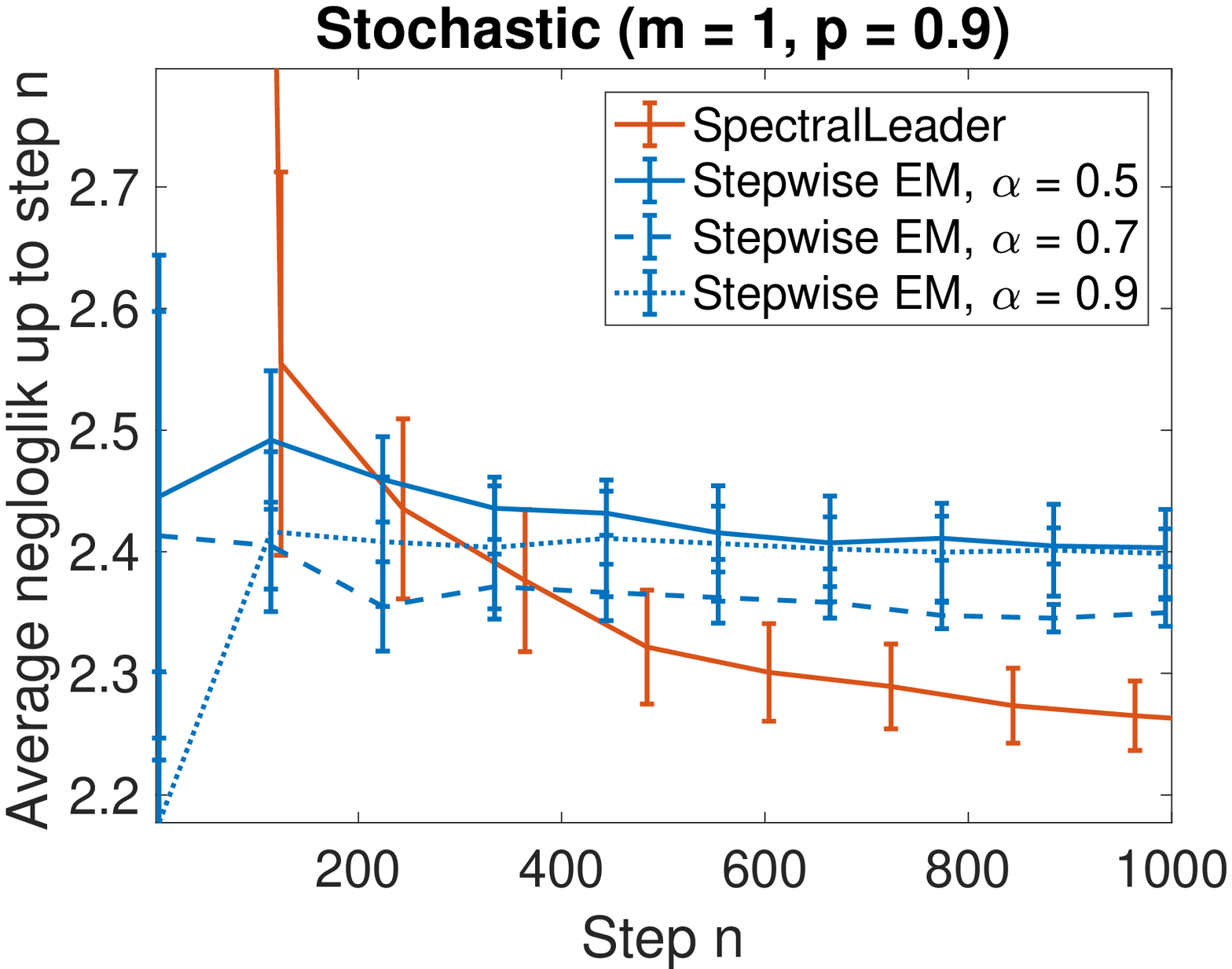}
   \caption{}
    \label{fig:synstoa}
  \end{subfigure} 
  \begin{subfigure}[b]{0.402\textwidth}
\includegraphics[width=1\textwidth]{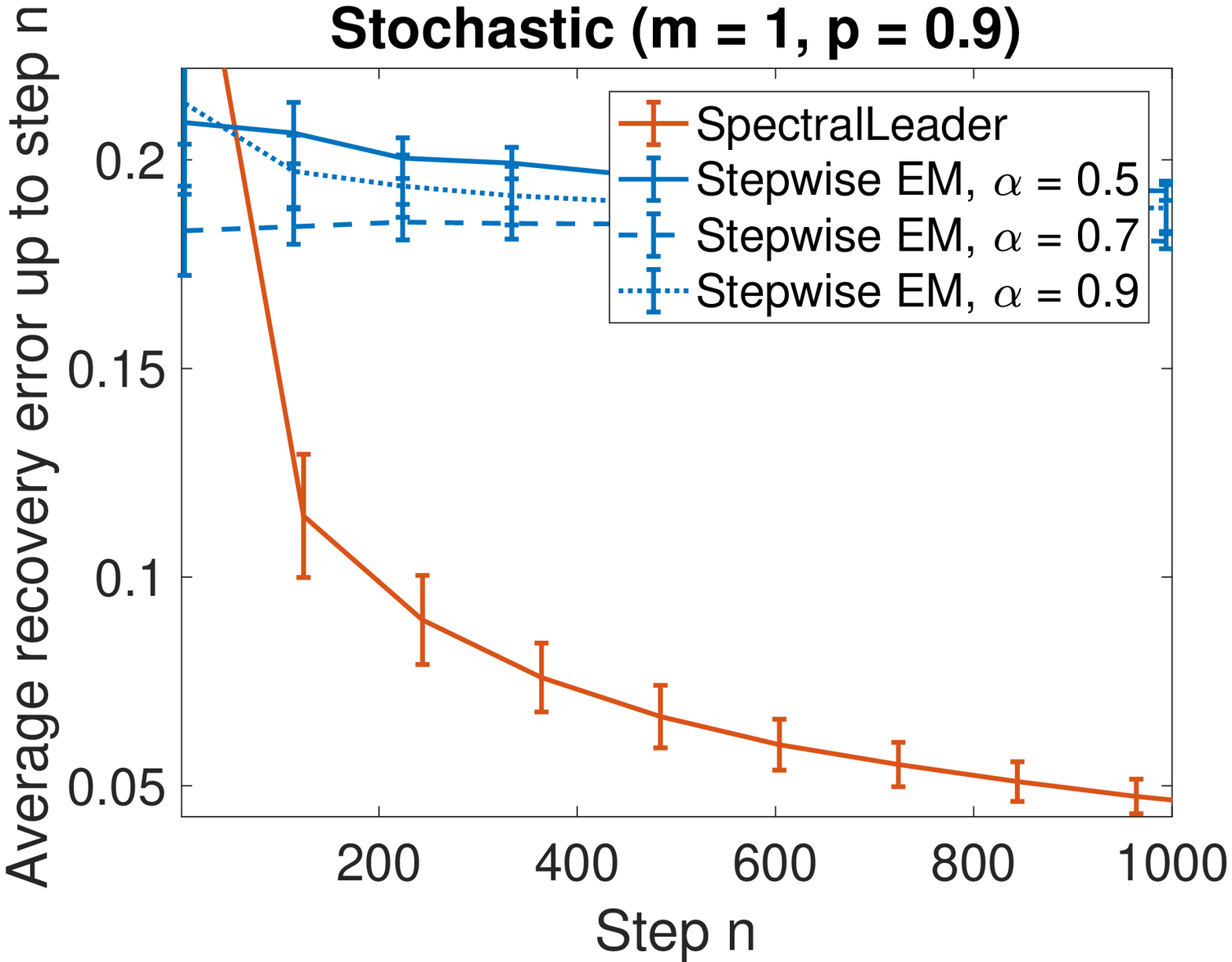}
  \caption{}
    \label{fig:synstoe}
  \end{subfigure} \\
  \begin{subfigure}[b]{0.40\textwidth}
 \includegraphics[width=1\textwidth]{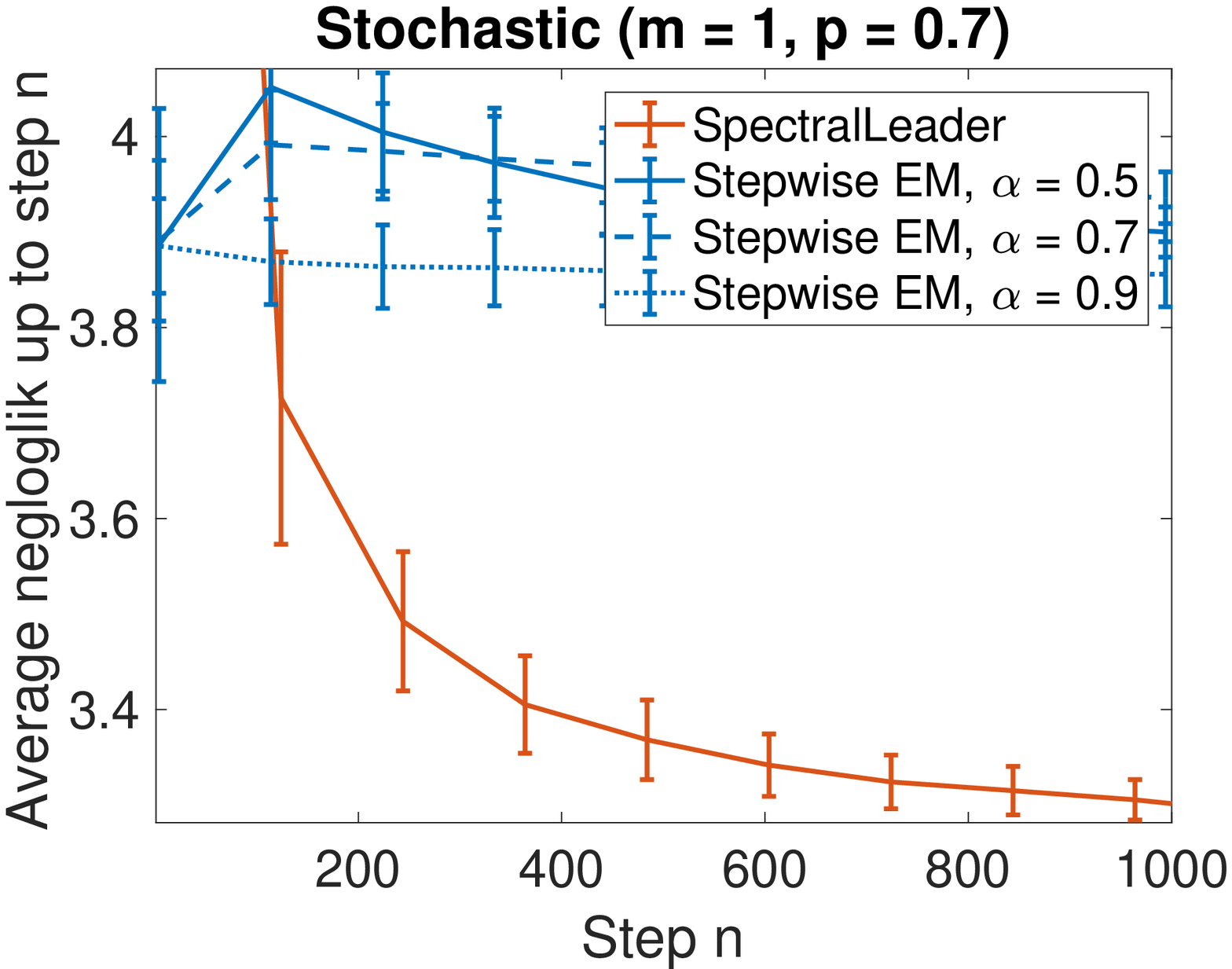}
   \caption{}
    \label{fig:synstoc}
  \end{subfigure} 
  \begin{subfigure}[b]{0.402\textwidth}
 \includegraphics[width=1\textwidth]{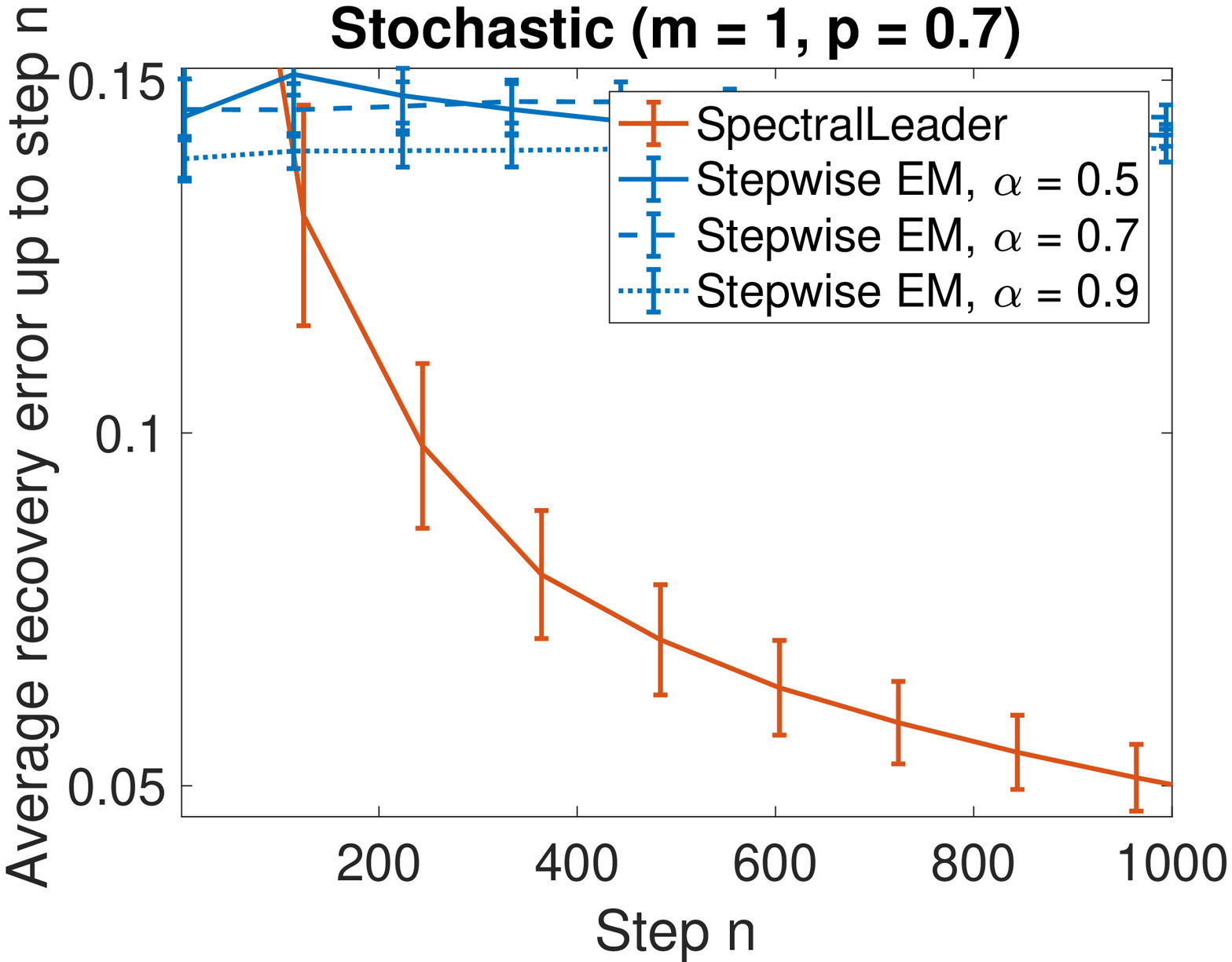}
   \caption{}
    \label{fig:synstog}
  \end{subfigure} \\
  \begin{subfigure}[b]{0.40\textwidth}
 \includegraphics[width=1\textwidth]{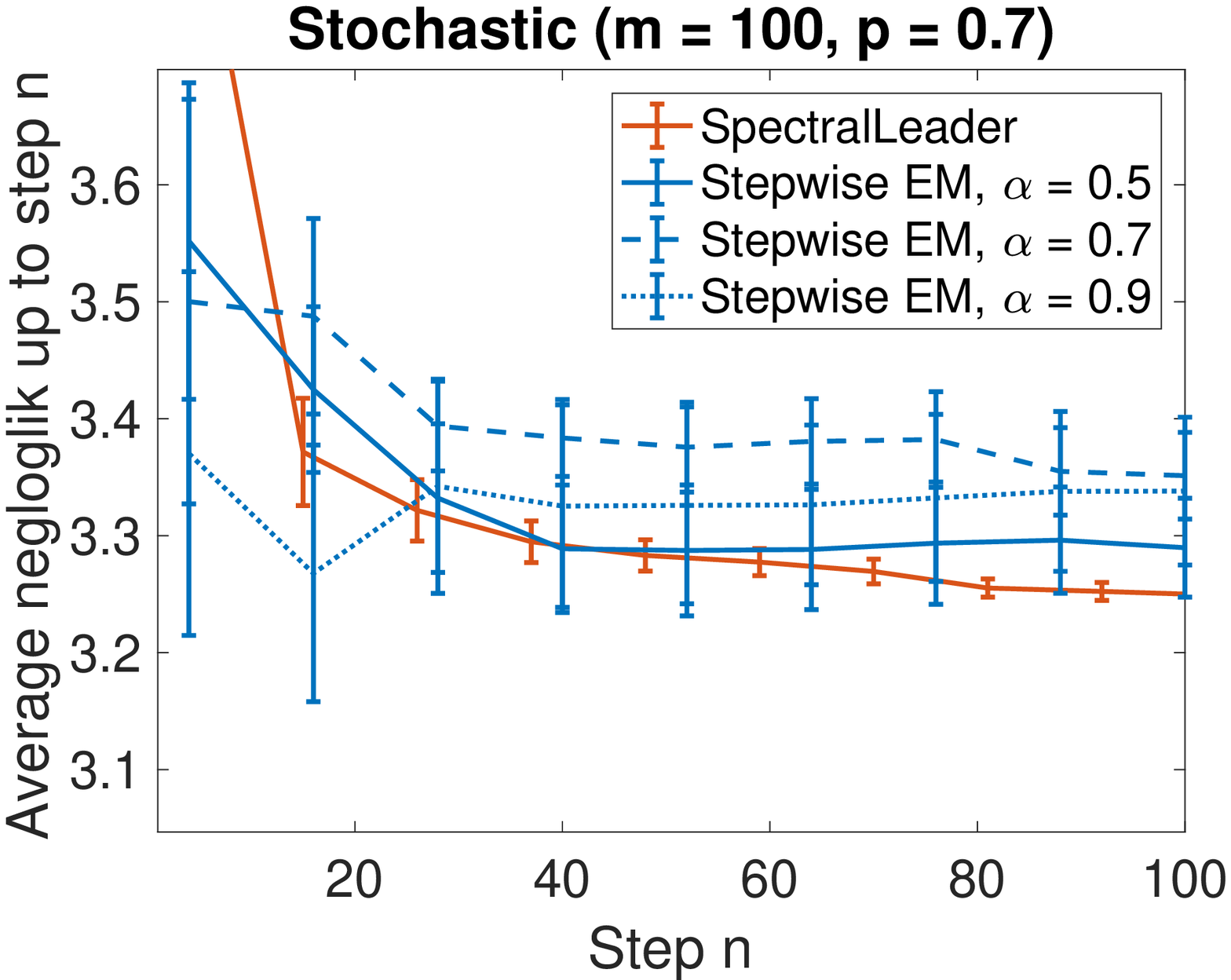}  \caption{}
    \label{fig:synstod}
  \end{subfigure} 
  \begin{subfigure}[b]{0.402\textwidth}
 \includegraphics[width=1\textwidth]{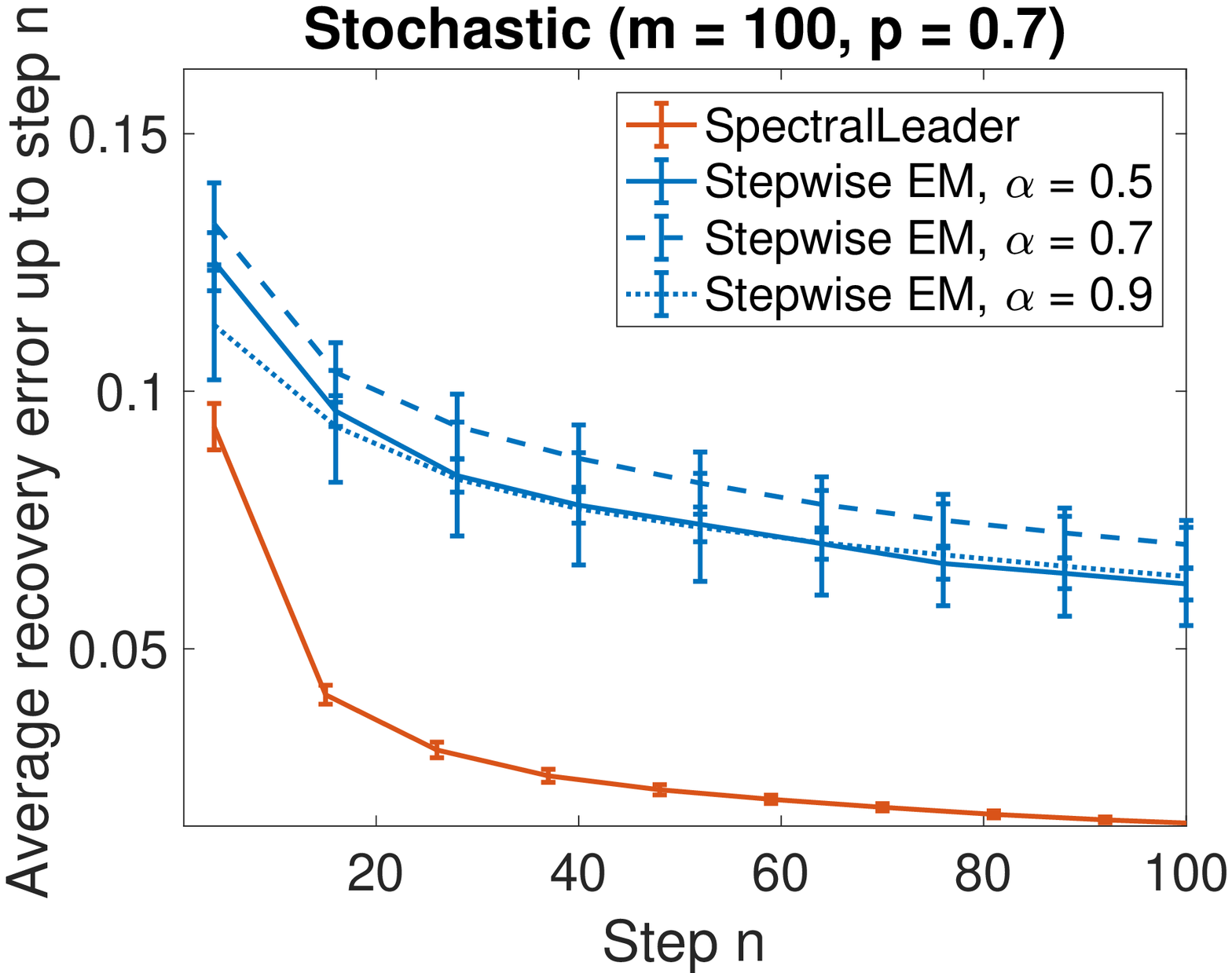}
   \caption{}
    \label{fig:synstoh}
  \end{subfigure} \\
  \begin{subfigure}[b]{0.40\textwidth}
 \includegraphics[width=1\textwidth]{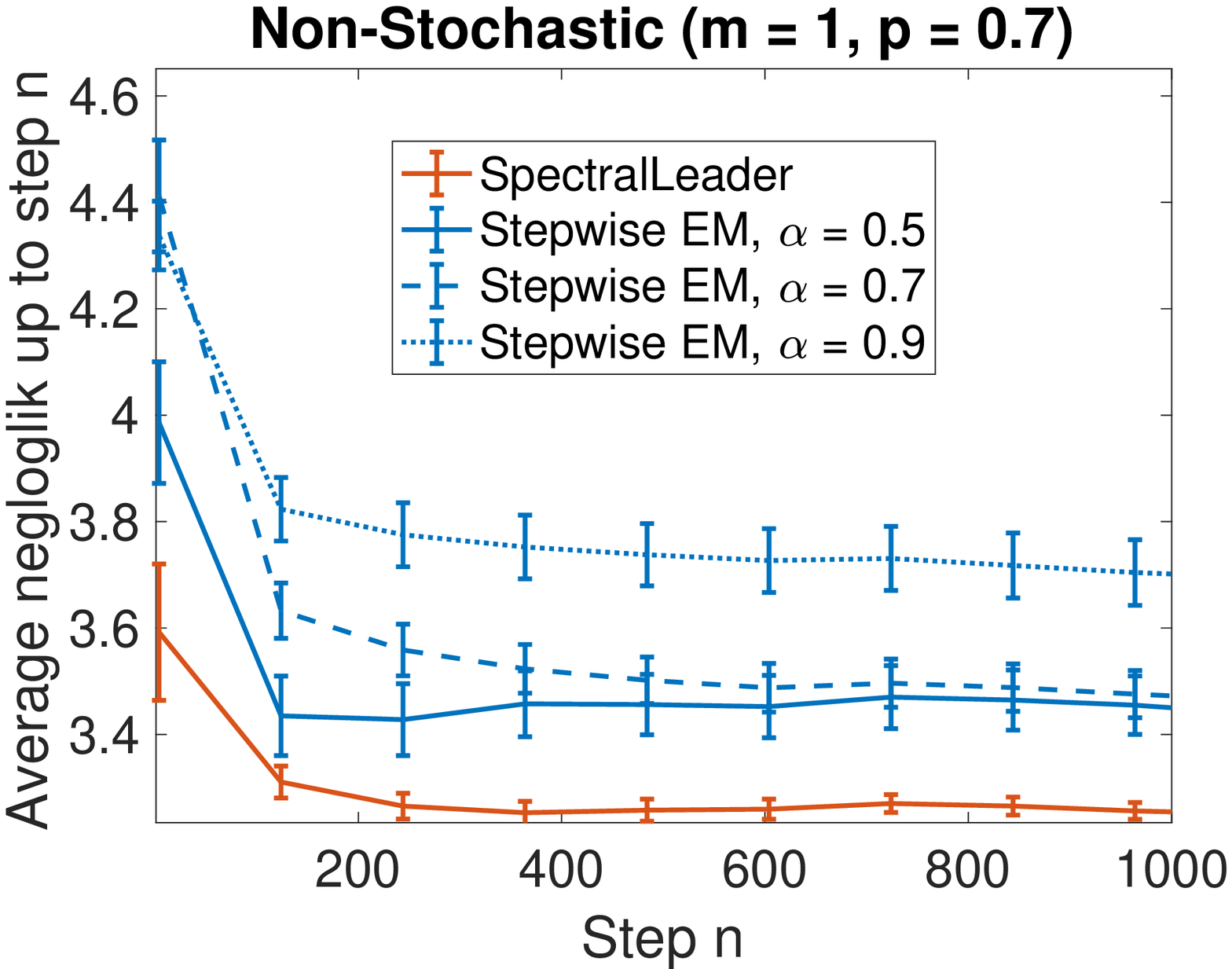}  \caption{}
    \label{fig:non1}
  \end{subfigure} 
  \begin{subfigure}[b]{0.402\textwidth}
 \includegraphics[width=1\textwidth]{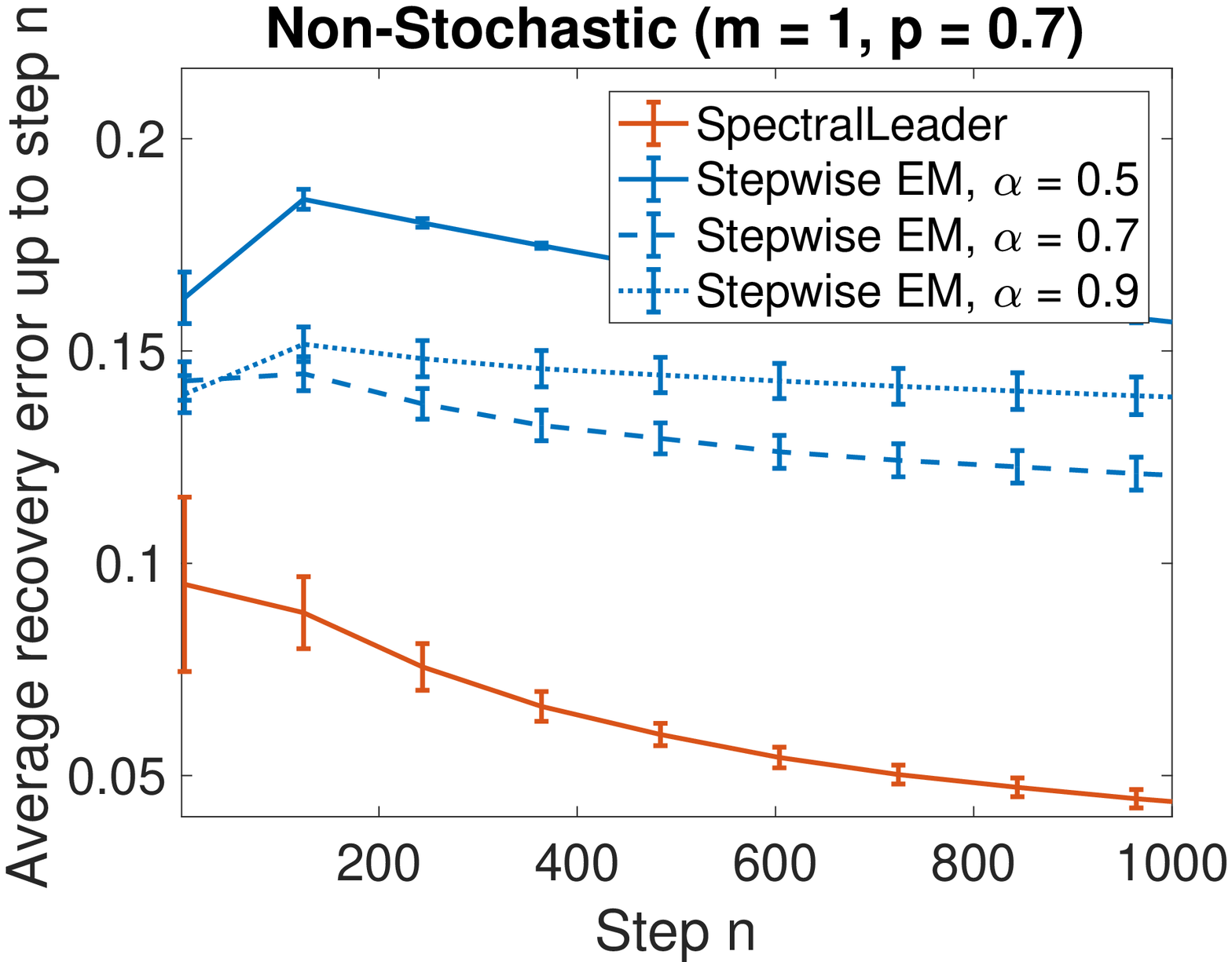}
   \caption{}
    \label{fig:non2}
  \end{subfigure} 
\caption{The evaluations in synthetic problems. The first column shows the results under the metric $\mathcal{L}_{n}^{(1)}$ and the second column shows the results under the metric $\mathcal{L}_{n}^{(2)}$.} 
 \label{fig:synsto1}
\end{figure}

\section{Experiments}

In this section, we evaluate $\spectralfpl$ and compare it with stepwise EM \cite{cappe2009line}. We experiment with both stochastic and non-stochastic synthetic problems, as well as with two real-world problems. 

Our chosen baseline is stepwise EM \cite{cappe2009line}, an online EM algorithm. We choose this baseline as it outperforms other online EM algorithms, such as incremental EM \cite{liang2009online}. Stepwise EM has two key tuning parameters: the step-size reduction power $\alpha$ and the mini-batch size $m$  \cite{liang2009online,cappe2009line}. 
The smaller the $\alpha$, the faster the old sufficient statistics are forgotten. The mini-batch size $m$ is the number of documents to calculate the sufficient statistics for each update of stepwise EM. In the following experiments, we compared $\spectralfpl$ to stepwise EM with varying $\alpha$ and $m$.

All compared algorithms are evaluated by their models at time $t$, $\theta_{t-1} = ((\omega_{t - 1, i})_{i = 1}^K, (u_{t - 1, i})_{i = 1}^K)$, which are learned from the first $t - 1$ steps. We report two metrics: \emph{average negative predictive log-likelihood up to step $n$}, $\mathcal{L}_{n}^{(1)} = \frac{1}{n}\sum_{t=2}^n  \left(-\log \sum_{i=1}^K P_{\theta_{t-1}}(C=i) \prod_{l=1}^L P_{\theta_{t-1}}(\mathbf{x}=\mathbf{x}_t^{(l)} \mid C=i) \right)$, where $L$ is the number of observed words in each document; and \emph{average recovery error up to step $n$}, $\mathcal{L}_{n}^{(2)} =  \frac{1}{n}\sum_{t=2}^n \|M_{3,\ast} - \hat{M}_{3,t-1}\|_F^2$. The latter metric is the average difference between the distribution in hindsight $M_{3,\ast}$ and the predicted distribution $\hat{M}_{3,t-1}$ at time $t$, and measures the parameter reconstruction error. Specifically, $M_{3,\ast} = \sum_{i=1}^K \omega_{*, i} u_{*, i} \otimes u_{*, i} \otimes u_{*, i}$ and $\hat{M}_{3,t-1} = \sum_{i=1}^K \omega_{t-1, i} u_{t-1, i} \otimes u_{t-1, i} \otimes u_{t-1, i}$, where $\theta^\ast = ((\omega_{*, i})_{i = 1}^K, (u_{*, i})_{i = 1}^K)$ are the parameters of the unknown model. In synthetic problems, we know $\theta^\ast$. In real-world problems, we learn $\theta^\ast$ by the spectral method because we have all data in advance. The recovery error is related to the regret, through the relation of our loss function and the Frobenius norm in \cref{lem:tool}. Note that EM in the offline setting minimizes the negative log-likelihood, while the spectral method in the offline setting minimizes the recovery error of tensors. All reported results are averaged over $10$ runs.

\subsection{Synthetic Stochastic Setting}
\label{sec:synsto}
We compare $\spectralfpl$ with stepwise EM on two synthetic problems in the stochastic setting. In this setting, the topic of the document at all times $t$ is sampled i.i.d. from a fixed distribution. This setting represents a scenario where the sequence of topics is not correlated. The number of distinct topics is $K = 3$, the vocabulary size is $d = 3$, and each document has $3$ observed words. In practice, some topics are more popular than others. Therefore, we sample topics as follows. At each time, the topic is randomly sampled from the distribution where $P(C = 1) = 0.15$, $P(C = 2) = 0.35$, and $P(C = 3) = 0.5$. Given the topic, the conditional probability of words is $P(\mathbf{x} = e_i|C=j) = p$ when $i=j$, and $P(\mathbf{x} = e_i|C=j) = \frac{1-p}{2}$ when $i\ne j$. With smaller $p$, the conditional distribution of words given different topic becomes similar, and the difficulty of distinguishing different topics increases. In Section \ref{sec:synsto} and \ref{sec:synnonsto}, we define the hard problem as the synthetic problem with $p = 0.7$ and the easy problem as the synthetic problem with $p = 0.9$. For $m = 1$, we evaluate on the easy problem and the hard problem. For $m = 100$, we further focus on the hard problem. 
We show the results before the different methods converge: for $m = 1$, we report results before $n = 1000$, and for $m = 100$ we report both results before $n = 100$. 

Our results are reported in Figure \ref{fig:synsto1}. We observe three trends. First, under the metric $\mathcal{L}_{n}^{(1)}$, 
stepwise EM is very sensitive to its parameters $\alpha$ and $m$, while $\spectralfpl$ is competitive or even better, compared to the stepwise EM with its best $\alpha$ and $m$. For example, the best $\alpha$ is $0.7$ in Figure \ref{fig:synstoa}, and the best $\alpha$ is $0.9$ in Figure \ref{fig:synstoc}. Even for the same problem with different $m$, the best $\alpha$ is different: the best $\alpha$ is $0.9$ in Figure \ref{fig:synstoc}, while the best $\alpha$ is $0.5$ in Figure \ref{fig:synstod}. In all cases, $\spectralfpl$ performs the best. Second, similar to \cite{liang2009online}, stepwise EM improves when the mini-batch size increases to $m = 100$. But $\spectralfpl$ still performs better compared to stepwise EM with its best $\alpha$. Third, $\spectralfpl$ performs much better than stepwise EM under the metric $\mathcal{L}_{n}^{(2)}$. These results indicate that a careful grid search of $\alpha$ and $m$ is usually needed to optimize stepwise EM. In contrast, $\spectralfpl$ is very competitive without any parameter tuning. Note that grid search in the online setting is nearly impossible, since the future data are unknown in advance.

\begin{figure*}[hp]
  \centering
   \begin{subfigure}[b]{0.40\textwidth}
  \includegraphics[width=1\textwidth]{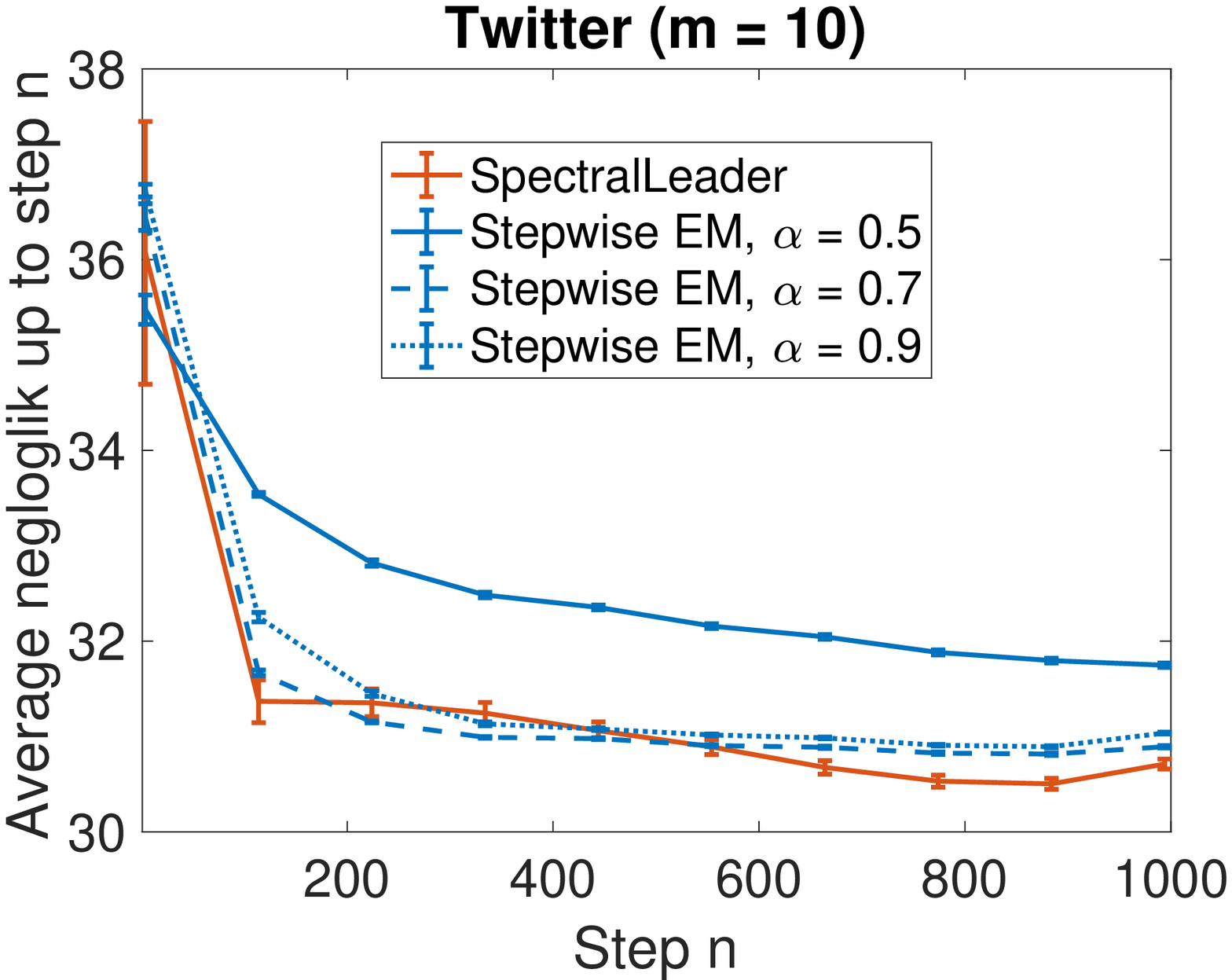}
   \caption{}
    \label{fig:real_a}
  \end{subfigure} 
  \begin{subfigure}[b]{0.415\textwidth}
\includegraphics[width=1\textwidth]{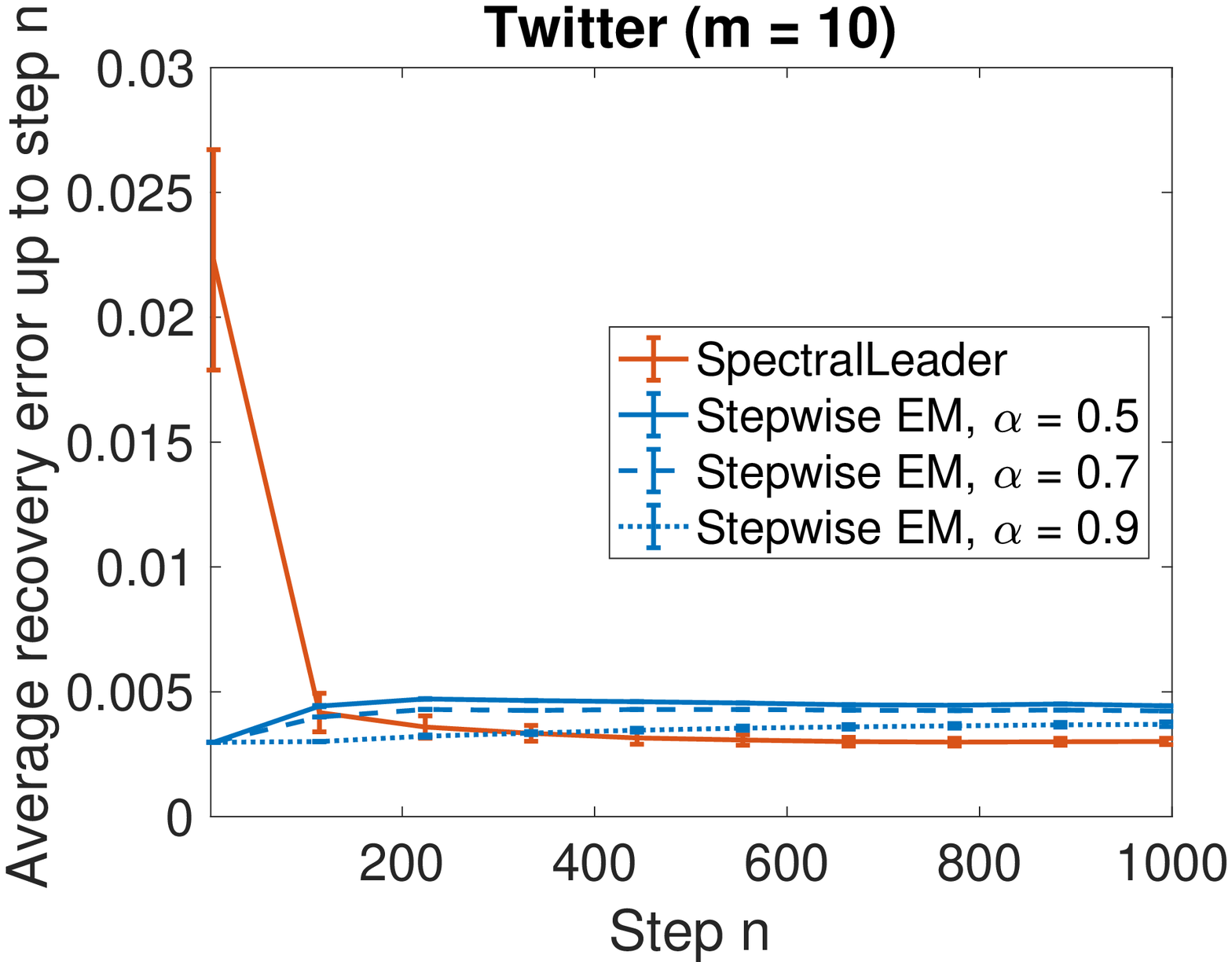}
  \caption{}
    \label{fig:real_e}
  \end{subfigure} \\
  \begin{subfigure}[b]{0.40\textwidth}
 \includegraphics[width=1\textwidth]{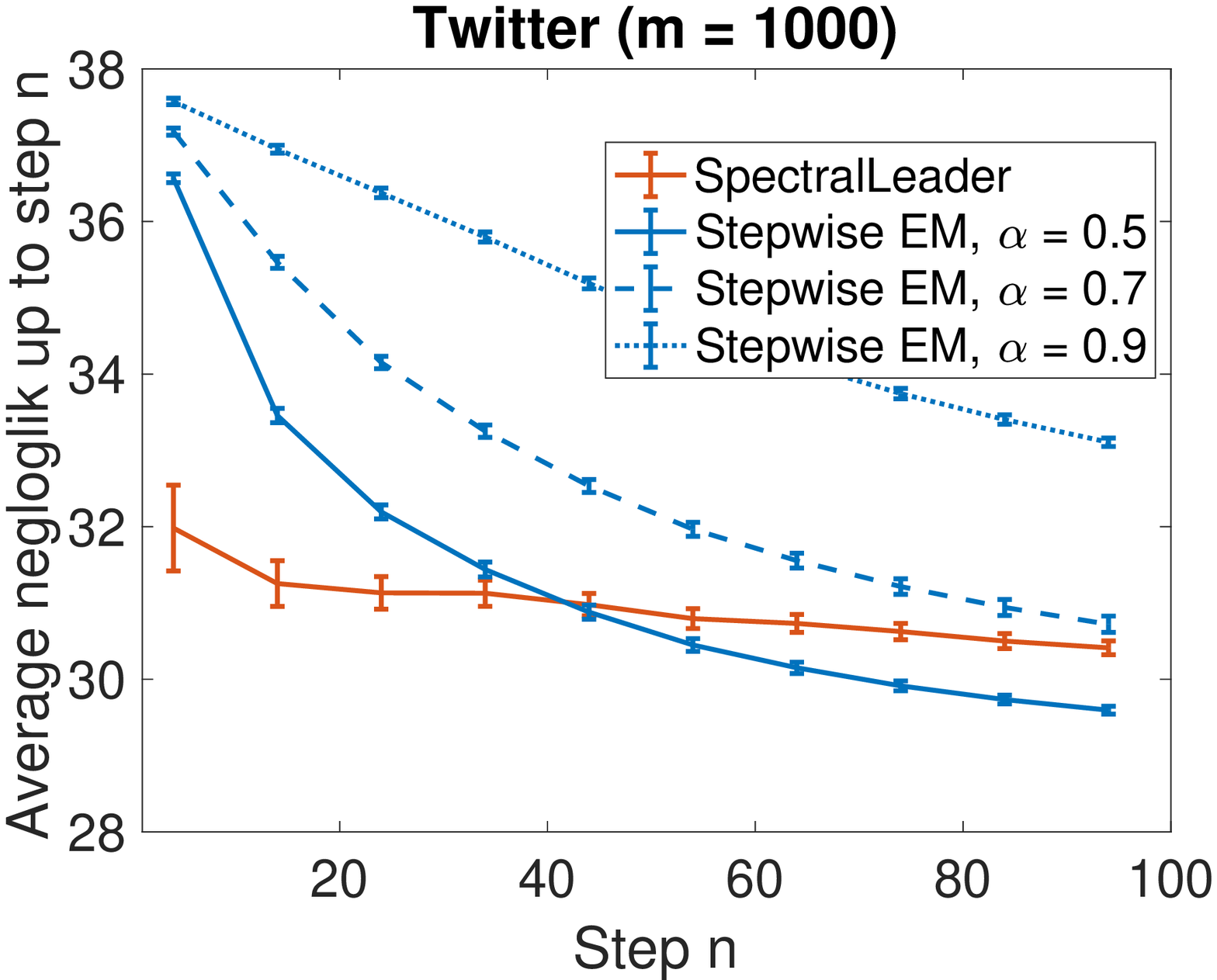}
   \caption{}
    \label{fig:real_c}
  \end{subfigure} 
  \begin{subfigure}[b]{0.405\textwidth}
 \includegraphics[width=1\textwidth]{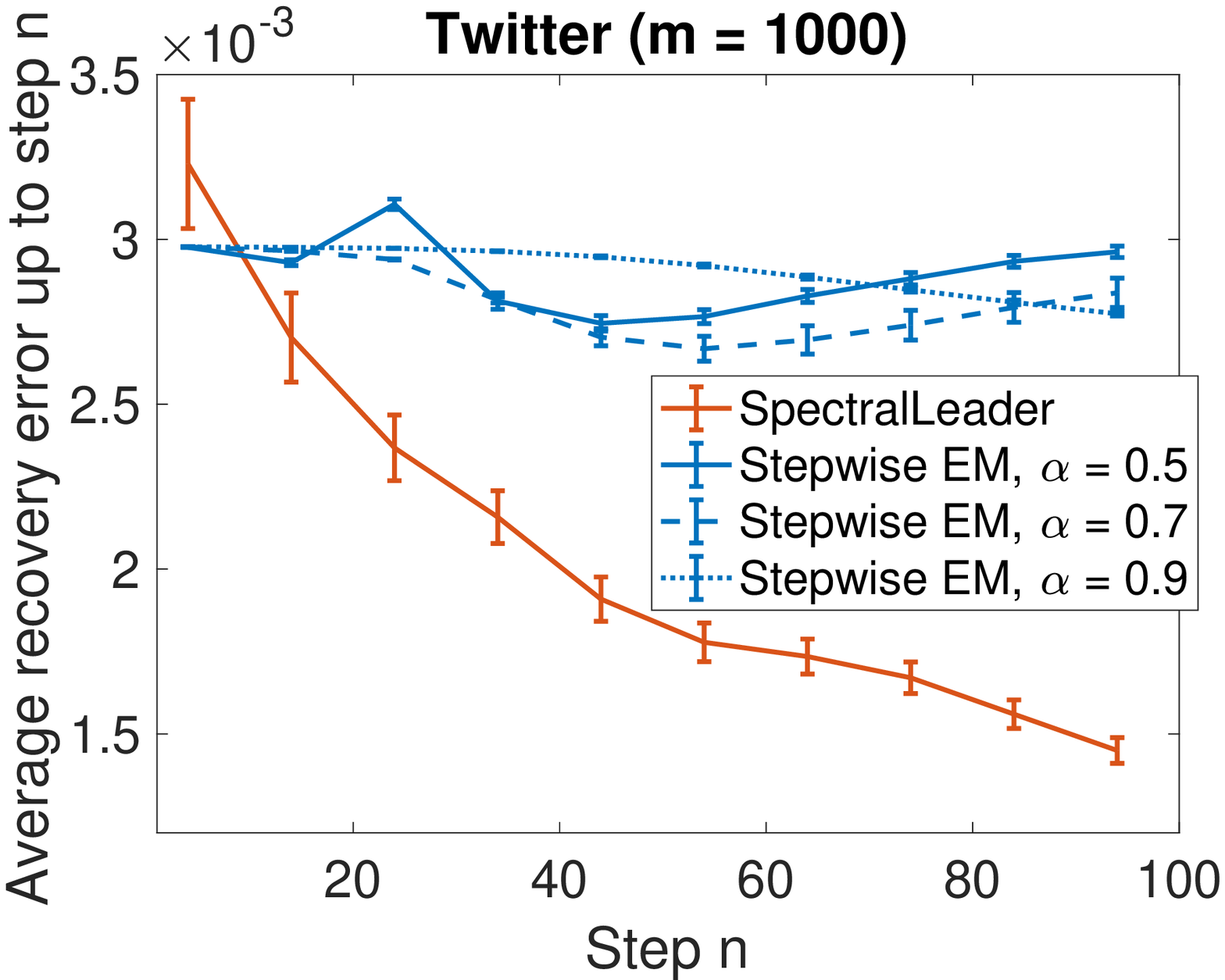}
   \caption{}
    \label{fig:real_g}
  \end{subfigure} \\
  \begin{subfigure}[b]{0.40\textwidth}
 \includegraphics[width=1\textwidth]{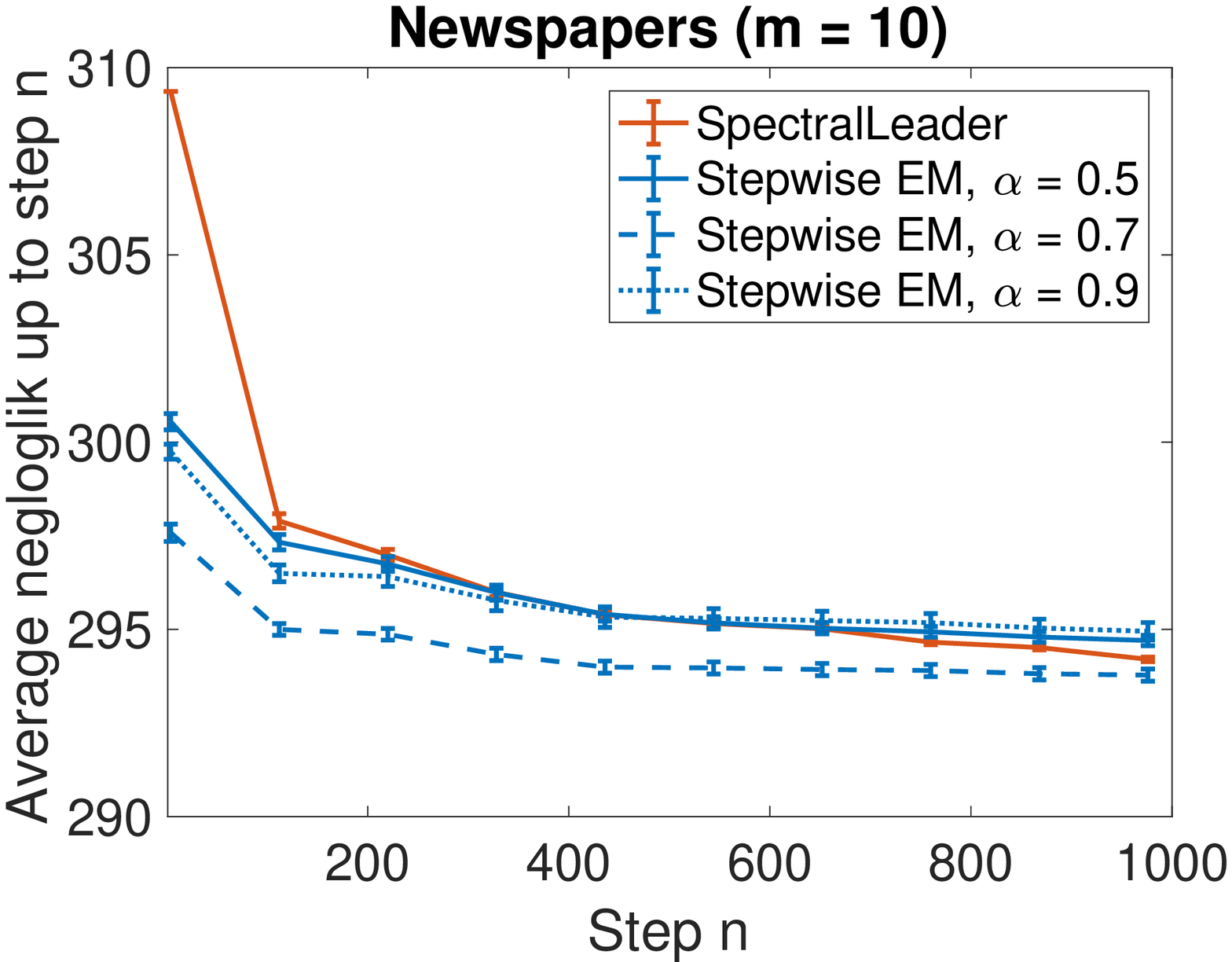}  \caption{}
    \label{fig:real_d}
  \end{subfigure} 
  \begin{subfigure}[b]{0.40\textwidth}
 \includegraphics[width=1\textwidth]{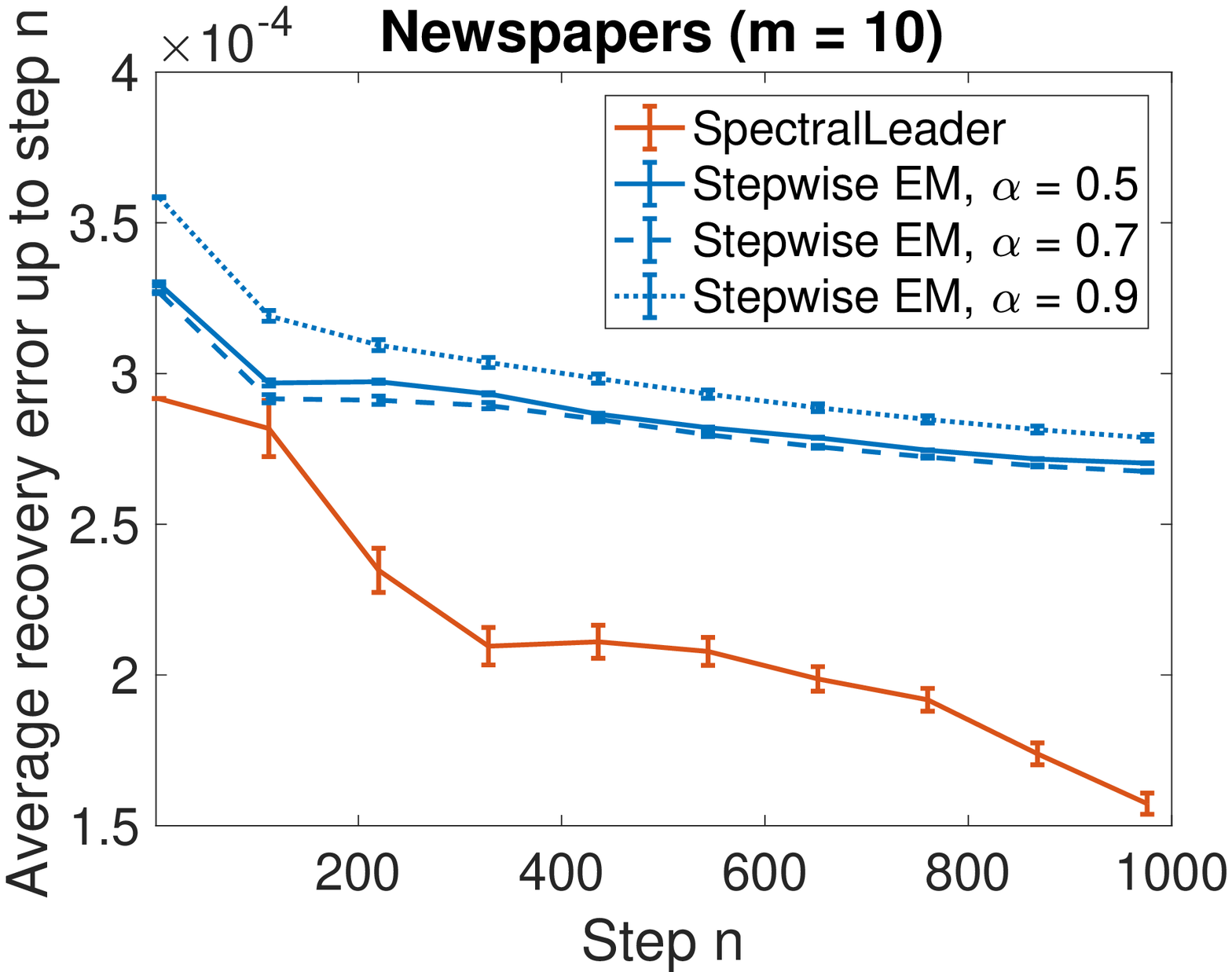}
   \caption{}
    \label{fig:real_h}
  \end{subfigure} \\
  \begin{subfigure}[b]{0.40\textwidth}
 \includegraphics[width=1\textwidth]{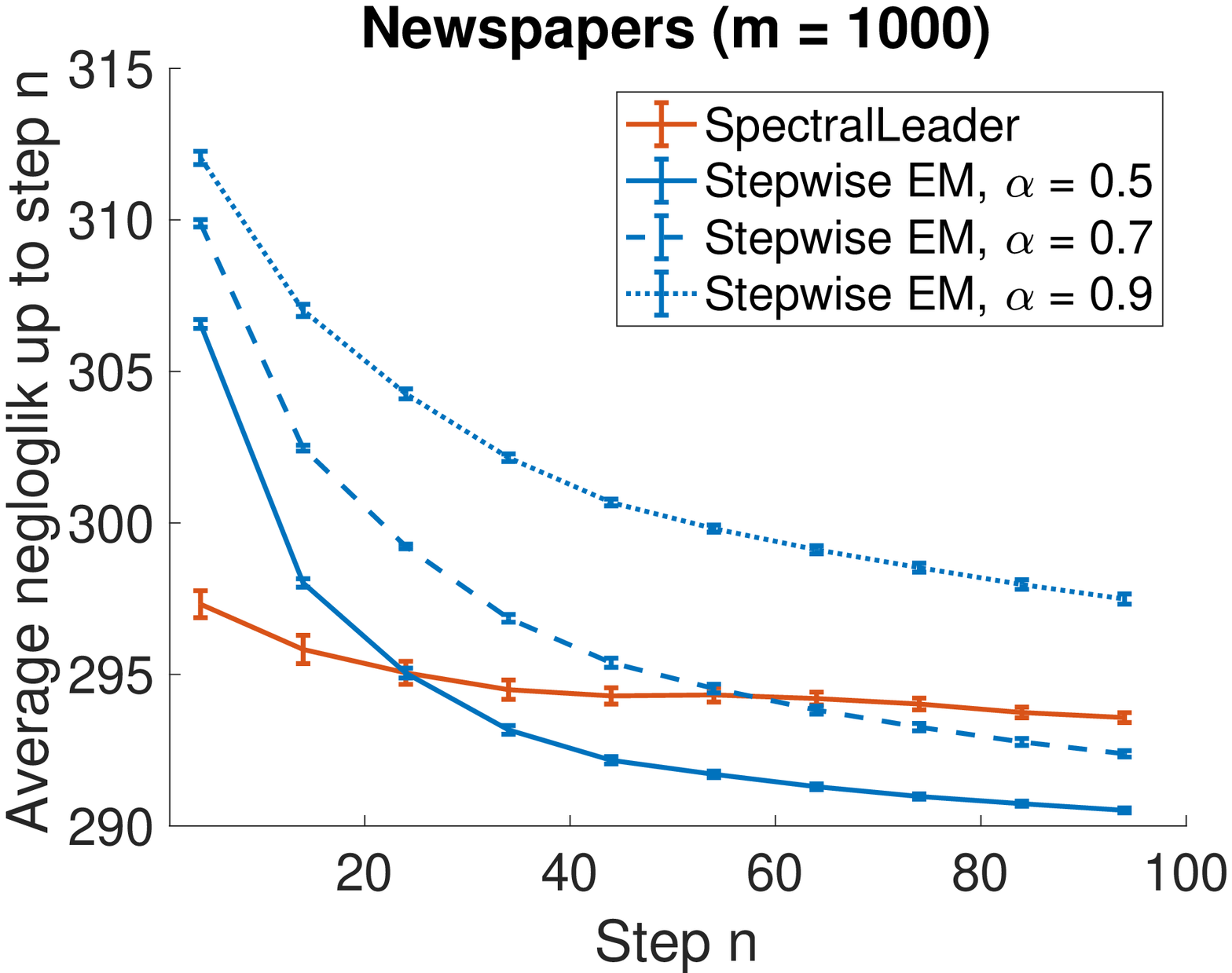}  \caption{}
    \label{fig:real_m}
  \end{subfigure} 
  \begin{subfigure}[b]{0.395\textwidth}
 \includegraphics[width=1\textwidth]{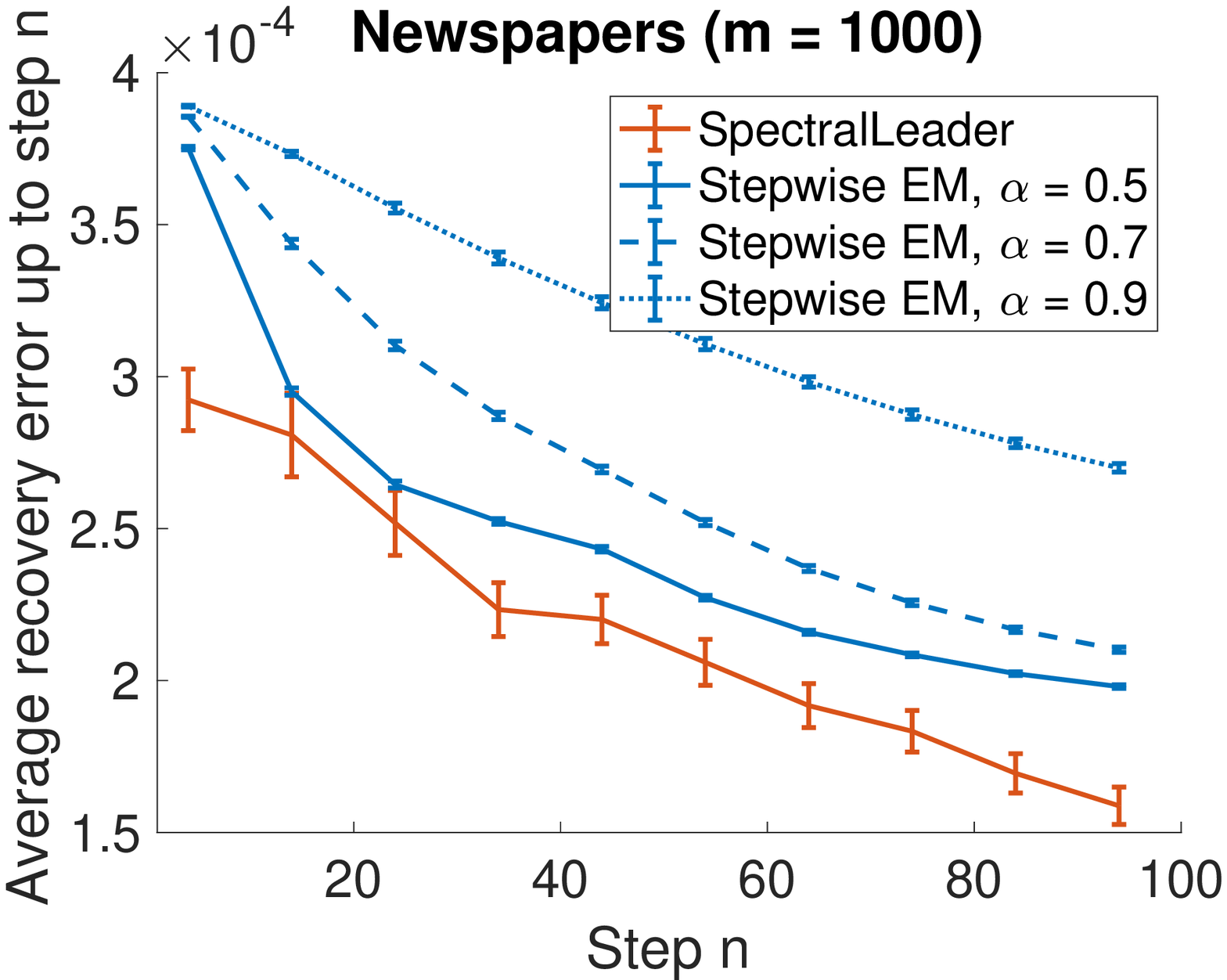}
   \caption{}
    \label{fig:real_n}
  \end{subfigure}
 \caption{The evaluations on two real-world datasets. The first column shows the results under the metric $\mathcal{L}_{n}^{(1)}$, while the second  column shows the results under the metric $\mathcal{L}_{n}^{(2)}$. } 
 \label{fig:realnew1}
 \end{figure*}

\subsection{Synthetic Non-Stochastic Setting}
\label{sec:synnonsto}

Now we evaluate all algorithms on the hard problem in non-stochastic setting. This problem is the same as the hard problem in the stochastic setting, except that topics of the documents are strongly correlated over time. In each batch of $100$ steps, sequentially we have $15$ documents from topic $1$, $35$ documents from topic $2$, and $50$ documents from topic $3$. 

Our results are reported in Figure \ref{fig:non1} and \ref{fig:non2}. We observe two major trends. First, for stepwise EM, the $\alpha$ leading to lowest negative log-likelihood is $0.5$. This result matches well the fact that the smaller the $\alpha$, the faster the old sufficient statistics are forgotten, and the faster stepwise EM adapts to the non-stochastic setting. Second, in terms of adaptation to correlated topics, $\spectralfpl$ is even better than stepwise EM with $\alpha = 0.5$. Note that $\alpha = 0.5$ is the smallest valid value of $\alpha$ for stepwise EM \cite{liang2009online}.

\subsection{Real World Problems}

In this section, we compare $\spectralfpl$ to stepwise EM on real world problems. We evaluate on Newspapers data \footnote{Please see \url{https://www.kaggle.com/snapcrack/all-the-news}.}  collected over multiple years and Twitter data \footnote{Please see \url{https://www.kaggle.com/kinguistics/election-day-tweets}.}  collected during the $2016$ United States elections. They provide sequences of documents with timestamps and the distributions of topics change over time. After preprocessing, we retain $500$ most frequent words in the vocabulary. We set $K = 5$. We evaluate all algorithms on $100$K documents. We compare $\spectralfpl$ to stepwise EM with multiple $\alpha$, and mini-batch sizes $m = 10$ and $m = 1000$. We show the results before the different methods converge: for $m = 10$, we report results before $n = 1000$, and for $m = 1000$ we report results before $n = 100$. 
To handle the large scale streaming data (\emph{e.g.}, $5$M words in Newspapers data), we introduce reservoir sampling, and set the window size of reservoir as $10$,$000$.

Our results are reported in \cref{fig:realnew1}. We observe four major trends. First, under the metric $\mathcal{L}_{n}^{(2)}$, $\spectralfpl$ performs better than stepwise EM. Second, under the metric $\mathcal{L}_{n}^{(1)}$ on both datasets, the optimal $\alpha$ for stepwise EM are different, for $m = 10$ versus $m = 1000$. Third, when $m= 10$, under the metric $\mathcal{L}_{n}^{(1)}$,  $\spectralfpl$ performs competitive with or better than stepwise EM with its best $\alpha$. Fourth, when $m= 1000$, under the metric $\mathcal{L}_{n}^{(1)}$, $\spectralfpl$ is not as good as the stepwise EM with its best $\alpha$. However, directly using $\spectralfpl$ without the effort of tuning any parameters can still provide good performance. These results suggest that, even when the mini-batch size is large, $\spectralfpl$ is still very useful under the log-likelihood metric: in practice we can quickly achieve reasonable results by $\spectralfpl$ without any parameter tuning.


\section{Conclusions}

We propose $\spectralfpl$, a novel online learning algorithm for latent variable models. In an instance of a bag-of-words model, we define a novel per-step loss function, prove that $\spectralfpl$ converges to a global optimum, and derive a sublinear cumulative regret bound for $\spectralfpl$. Our experimental results show that $\spectralfpl$ performs similarly to or better than the optimally-tuned online EM. In our future work, we want to extend our method to more complicated latent-variable models, such as HMMs and LDA \cite{anandkumar2014tensor}.

\bibliographystyle{splncs04}
\bibliography{references}



\end{document}